\newmdenv[%
   leftline=true,rightline=false,topline=false,bottomline=false,%
   outerlinewidth=.7pt,innerlinewidth=.7pt,middlelinewidth=2pt,%
   outerlinecolor=black,innerlinecolor=black,middlelinecolor=white,%
   innerleftmargin=6pt,innerrightmargin=0pt,%
   innertopmargin=0pt,innerbottommargin=0pt,%
   skipabove=1\baselineskip,skipbelow=.5\baselineskip%
]{lefttwolines}
\newtheorem{lemma}{Lemma}
\newcommand{\eproof}{$\null\hfill\blacksquare$}
\newenvironment{proof}{\par\noindent{\bf Proof:\ }}{\eproof} 
\newcommand{\mc}[1]{\ensuremath \mathcal{#1}}
\newcommand{\mb}[1]{\ensuremath \mathbb{#1}}
\DeclarePairedDelimiter{\brac}{[}{]}
\DeclarePairedDelimiter{\braces}{\{}{\}}
\DeclarePairedDelimiter{\paren}{\lparen}{\rparen}
\newcommand{\xdim}{d}
\newcommand{\rewt}{\omega}
\newcommand{\rewtvec}{\boldsymbol{\omega}}
\newcommand{\nkernels}{k}
\newcommand{\densMix}{m}
\newcommand{\densComp}{c}
\newcommand{\sampMix}{M}
\newcommand{\sampComp}{C}
\newcommand{\estMix}{\hat{\densMix}}
\newcommand{\estComp}{\hat{\densComp}}
\newcommand{\estMixll}{\tilde{\densMix}}
\newcommand{\estCompll}{\tilde{\densComp}}
\newcommand{\xset}{\mathcal{X}}
\newcommand{\xsett}{\mathcal{X}_\tau}
\newcommand{\yset}{\mathcal{Y}}
\newcommand{\sset}{\mathcal{S}}
\newcommand{\B}[1]{\mc{B}(\IfEqCase{#1}{
                    {1}{\mb{R}}}
                    [\mb{R}^{#1}])}
\newcommand{\Rd}[1]{(\IfEqCase{#1}{
                    {1}{\mb{R}}}
                    [\mb{R}^{#1}],\B{#1})}
\newcommand{\Salgebra}{\mc{A}}
\newcommand{\cone}{f_1}
\newcommand{\czero}{f_0}
\newcommand{\mixu}{f}
\newcommand{\mixl}{g}
\newcommand{\mpu}{\alpha}
\newcommand{\mpl}{\beta}
\newcommand{\Piset}{\Pi}
\newcommand{\Piall}{\Pi^{\text{all}}}
\newcommand{\Pires}{\Pi^{\text{res}}}
\newcommand{\pall}{\mc{P}^{\text{all}}}
\newcommand{\muu}{\mu}
\newcommand{\mul}{\nu}
\newcommand{\alphamax}{{\alpha^*}}
\newcommand{\amax}[2]{a_{#1}^{#2}}
\newcommand{\mpumax}{\alpha^{+}}
\newcommand{\mplmax}{\beta^{+}}
\newcommand{\mpustar}{\alpha^{*}}
\newcommand{\mplstar}{\beta^{*}}
\newcommand{\mpumm}{\alpha_\tau^{+}}
\newcommand{\mplmm}{\beta_\tau^{+}}
\newcommand{\mpuss}{\alpha_\tau^{*}}
\newcommand{\mplss}{\beta_{\tau}^{*}}
\newcommand{\Atwo}{A^+}
\newcommand{\taudiv}{\tau_d}
\newcommand{\muone}{\mu_1}
\newcommand{\muzero}{\mu_0}
\newcommand{\muonestar}{\mu_1^{*}}
\newcommand{\muzerostar}{\mu_0^{*}}
\newcommand{\tmuu}{\mu_\score}
\newcommand{\tmul}{\nu_\score}
\newcommand{\tmuone}{\mu_{\score 1}}
\newcommand{\Xu}{X^{u}}
\newcommand{\Xl}{X^{l}}
\newcommand{\Xone}{X_1}
\newcommand{\DSu}{U}
\newcommand{\DSl}{L}
\newcommand{\AlgName}{AlphaMax}
\newcommand{\AlgNameC}{AlphaMax-N}
\newcommand{\AlgNameCT}{AlphaMax-N}
\newcommand{\AlgNameCPca}{AlphaMax-NM}
\newcommand{\AlgNameGMM}{MSGMM}
\newcommand{\AlgNameGMMT}{MSGMM-T}
\newcommand{\AlgNameGMMPca}{MSGMM}
\newcommand{\mixwt}{v}
\newcommand{\locone}{u_1}
\newcommand{\loczero}{u_0}
\newcommand{\sclone}{\Sigma_1}
\newcommand{\sclzero}{\Sigma_0}
\newcommand{\gauss}{\mc{N}}
\newcommand{\old}[1]{#1}
\newcommand{\new}[1]{#1}
\newcommand{\wgtu}{\bar{w}^u}
\newcommand{\wgtl}{\bar{w}^l}
\newcommand{\Wu}{W^{u}}
\newcommand{\Wl}{W^{l}}
\newcommand{\phione}{\phi_1}
\newcommand{\phizero}{\phi_0}
\newcommand{\xu}{x^{u}}
\newcommand{\xl}{x^{l}}
\newcommand{\smmin}{\raisebox{0.05em}{$\scalebox{0.8}{$-$}$}}
\newcommand{\regeq}[2]{#1=#2}
\newcommand{\smeq}[2]{#1=#2}
\newcommand{\Smeq}[2]{#1=#2}
\newcommand{\smin}[2]{#1\in#2}
\newcommand{\score}{\tau}
\newcommand{\scorePost}{\tau_p }
\newcommand{\scaleboxsmall}[1]{\scalebox{0.95}{\mbox{$#1$}}}
\title{Estimating the class prior and posterior from noisy positives and unlabeled data}
\author{
  Shantanu Jain, Martha White, Predrag Radivojac \\
  Department of Computer Science\\
       Indiana University,  Bloomington, Indiana, USA\\
  \{shajain, martha, predrag\}@indiana.edu \\
}
\begin{document}

\maketitle

\begin{abstract}
We develop a classification algorithm for estimating posterior distributions from positive-unlabeled data, that is robust to noise in the positive labels and effective for high-dimensional data. In recent years, several algorithms have been proposed to learn from positive-unlabeled data; however, many of these contributions remain theoretical, performing poorly on real high-dimensional data that is typically contaminated with noise. We build on this previous work to develop two practical classification algorithms that explicitly model the noise in the positive labels and utilize univariate transforms built on discriminative classifiers. We prove that these univariate transforms preserve the class prior, enabling estimation in the univariate space and avoiding kernel density estimation for high-dimensional data. The theoretical development and parametric and nonparametric algorithms proposed here constitute an important step towards wide-spread use of robust classification algorithms for positive-unlabeled data. 
\end{abstract}

\section{Introduction}
Access to positive, negative and unlabeled examples is a standard assumption for most semi-supervised binary classification techniques. In many domains, however, a sample from one of the classes (say, negatives) may not be available, leading to the setting of learning from positive and unlabeled data \citep{Denis2005}. Positive-unlabeled learning often emerges in sciences and commerce where an observation of a positive example (say, that a protein catalyzes reactions or that a social network user likes a particular product) is usually reliable. Here, however, the absence of a positive observation cannot be interpreted as a negative example. In molecular biology, for example, an attempt to label a data point as positive (say, that a protein is an enzyme) may be unsuccessful for a variety of experimental and biological reasons, whereas in social networks an explicit dislike of a product may not be possible. Both scenarios lead to a situation where negative examples cannot be actively collected.


Fortunately, the absence of negatively labeled examples can be tackled by incorporating unlabeled examples as negatives, leading to the development of non-traditional classifiers. 
Here we follow the terminology by \cite{Elkan2008} that a traditional classifier predicts whether an example is positive or negative, whereas a non-traditional classifier predicts whether the example is positive or unlabeled. 
Positive vs. unlabeled (non-traditional) training is reasonable because the class posterior \textemdash\ and also the optimum scoring function for composite losses \citep{reid2010composite} \textemdash\ in the traditional setting is monotonically related to the posterior in the non-traditional setting. 
However, the true posterior can be fully recovered from the non-traditional posterior only if we know the class prior; i.e., the proportion of positives in unlabeled data. The knowledge of the class prior is also necessary for estimation of the performance criteria such as the error rate, balanced error rate or F-measure, and also for finding the right threshold for the non-traditional scoring function that leads to an optimal classifier with respect to some criteria \citep{menon2015learning}. 

Class prior estimation in a nonparametric setting has been actively researched in the past decade offering an extensive theory of identifiability \citep{Ward2009, Blanchard2010,  Scott2013, Jain2016} and a few practical solutions \citep{Elkan2008, Ward2009, duPlessis2014b, Sanderson2014, Jain2016, Ramaswamy2016}. Application of these algorithms to real data, however, is limited in that none of the proposed algorithms simultaneously deals with noise in the labels and practical estimation for high-dimensional data. 

Much of the theory on learning class priors relies on the assumption that either the distribution of positives is known or that the positive sample is clean. In practice, however, labeled data sets contain class-label noise, where an unspecified amount of negative examples contaminates the positive sample. This is a realistic scenario in experimental sciences where technological advances enabled generation of high-throughput data at a cost of occasional errors. One example for this comes from the studies of proteins using analytical chemistry technology; i.e., mass spectrometry. For example, in the process of peptide identification \citep{Steen2004}, bioinformatics methods are usually set to report results with specified false discovery rate thresholds (e.g., 1\%). Unfortunately, statistical assumptions in these experiments are sometimes violated thereby leading to substantial noise in reported results, as in the case of identifying protein post-translational modifications. Similar amounts of noise might appear in social networks such as Facebook, where some users select `like', even when they do not actually like a particular post. Further, the only approach that does consider similar such noise \citep{Scott2013} requires density estimation, which is known to be problematic for high-dimensional data. 

In this work, we propose the first classification algorithm, with class prior estimation, designed particularly for high-dimensional data with noise in the labeling of positive data. We first formalize the problem of class prior estimation from noisy positive and unlabeled data. 
We extend the existing identifiability theory for class prior estimation from positive-unlabeled data to this noise setting. We then show that we can practically estimate class priors and the posterior distributions by first transforming the input space to a univariate space, where density estimation is reliable. We prove that these transformations preserve class priors and show that they correspond to training a non-traditional classifier. We derive a parametric algorithm and a nonparametric algorithm to learn the class priors. Finally, we carry out experiments on synthetic and real-life data and provide evidence that the new approaches are sound and effective.


\section{Problem formulation}
Consider a binary classification problem of mapping an input space $\xset$ to an output space $\yset=\braces*{0,1}$. Let $\mixu$ be the true distribution of inputs. It can be represented as the following mixture
\begin{align}
\mixu(x)=\mpu \cone(x) + (1-\mpu) \czero(x) \label{eq_unlabeled},
\end{align}
%
where $x \in \xset$, $y \in \yset$, $f_y$ are distributions over $\xset$ for the positive ($y=1$) and negative ($y=0$) class, respectively; and $\mpu \in [0,1)$ is the class prior or the proportion of the positive examples in $\mixu$. We will refer to a sample from $\mixu$ as unlabeled data. 

Let $\mixl$ be the distribution of inputs for the labeled data. Because the labeled sample contains some mislabeled examples, the corresponding distribution is also a mixture of $\cone$ and a small proportion, say $1-\mpl$, of $\czero$. That is,
\begin{align}
\mixl(x) = \mpl \cone(x) + (1-\mpl) \czero(x) \label{eq_labeled}
,
\end{align}
where $\mpl \in (0,1]$. Observe that both mixtures have the same components but different mixing proportions. The simplest scenario is that the mixing components $\czero$ and $\cone$ correspond to the class-conditional distributions $p(x|Y=0)$ and $p(x|Y=1)$, respectively. However, our approach also permits transformations of the input space $\xset$, thus resulting in a more general setup. 

The objective of this work is to study the estimation of the class prior $\mpu = p(Y=1)$ and propose practical algorithms for estimating $\mpu$. The efficacy of this estimation is clearly tied to $\mpl$, where as $\mpl$ gets smaller, the noise in the positive labels becomes larger. 
We will discuss identifiability of $\mpu$ and $\mpl$ and give a practical algorithm for estimating $\mpu$ (and $\mpl$). We will then use these results to estimate the posterior distribution of the class variable, $p(y|x)$, despite the fact that the labeled set does not contain any negative examples.

\section{Identifiability}
The class prior is identifiable if there is a unique class prior for a given pair $(\mixu,\mixl)$. Much of the identifiability characterization in this section has already been considered as the case of asymmetric noise \citep{Scott2013}; see Section \ref{sec_related} on related work. We recreate these results here, with the aim to introduce required notation, to highlight several important results for later algorithm development and to include a few missing results needed for our approach. Though the proof techniques are themselves quite different and could be of interest, we include them in the appendix due to space. 

There are typically two aspects to address with identifiability. First, one needs to determine if a problem is identifiable, and, second, if it is not, propose a canonical form that is identifiable. In this section we will see that class prior is not identifiable in general because $f_0$ can be a mixture containing $f_1$ and vice versa. 
To ensure identifiability, it is necessary to choose a canonical form that prefers a class prior that makes the two components as different as possible; this canonical form was introduced as the mutual irreducibility condition \citep{Scott2013} and is related to the proper novelty distribution \citep{Blanchard2010} and the max-canonical form \citep{Jain2016}.

We discuss identifiability in terms of measures. Let $\muu$, $\mul$, $\muzero$ and $\muone$ be probability measures defined on some $\sigma$-algebra $\Salgebra$ on $\xset$, corresponding to $\mixu$, $\mixl$, $\czero$ and $\cone$, respectively. It follows that
\begin{align}
     \muu=\mpu \muone + (1-\mpu) \muzero \label{eq:muu}\\
     \mul = \mpl \muone + (1-\mpl) \muzero \label{eq:mul}
     .
\end{align}    
Consider a family of pairs of mixtures having the same components
\begin{align*}
\mc{F}(\Piset) = \{(\muu,\mul): \muu=\mpu \muone + (1-\mpu) \muzero,\mul = \mpl \muone + (1-\mpl)\muzero,(\muzero,\muone) \in \Piset, 0\leq \mpu < \mpl \leq 1\},    
\end{align*}
where $\Piset$ is some set of pairs of probability measures defined on $\Salgebra$. The family is parametrized by the quadruple $(\mpu,\mpl,\muzero,\muone)$. The condition $\mpl > \mpu$ means that $\mul$ has a greater proportion of $\muone$ compared to $\muu$.  This is consistent with our assumption that the labeled sample mainly contains positives. The most general choice for $\Piset$ is 
\begin{align*}
\Piall=\pall \times \pall \setminus \braces*{(\mu,\mu): \mu \in \pall}
,
\end{align*} 
where $\pall$ is the set of all probability measures defined on $\Salgebra$ and $\braces*{(\mu,\mu): \mu \in \pall}$ is the set of pairs with equal distributions. 
Removing equal pairs prevents $\muu$ and $\mul$ from being identical.

We now define the maximum proportion of a component $\lambda_1$ in a mixture $\lambda$, which is used in the results below and to specify the criterion that enables identifiability; more specifically,
\begin{equation}
\label{eq:all1}
\amax{\lambda}{\lambda_1}= \max \braces*{\alpha \in [0,1]:\lambda=\alpha\lambda_1+(1-\alpha)\lambda_0, \lambda_0 \in \pall}.
\end{equation}
Of particular interest is the case when $\amax{\lambda}{\lambda_1}=0$, which should be read as ``$\lambda$ is not a mixture containing $\lambda_1$". 
We finally define the set all possible $(\mpu,\mpl)$ that generate $\muu$ and $\mul$ when $(\muzero,\muone)$ varies in $\Piset$: 
\begin{equation*}
\Atwo(\muu,\mul,\Piset)= \scaleboxsmall{\{(\mpu,\mpl): \muu=\mpu \muone + (1-\mpu) \muzero,\mul = \mpl \muone + (1-\mpl) \muzero,(\muzero,\muone) \in \Piset, 0\leq \mpu < \mpl \leq 1\}.}
\end{equation*}
If $\Atwo(\muu,\mul,\Piset)$ is a singleton set for all $(\muu,\mul)\in \mc{F}(\Piset)$, then $\mc{F}(\Piset)$ is identifiable in $(\mpu,\mpl)$. 

First, we show that the most general choice for $\Piset$, $\Piall$, leads to unidentifiability (\autoref{lem:Fpiall}). Fortunately, however, by choosing a restricted set
\begin{align*}
\Pires= \braces*{(\muzero,\muone) \in \Piall: \amax{\muzero}{\muone}=0, \amax{\muone}{\muzero}=0}
\end{align*}
as $\Piset$, we do obtain identifiability (\autoref{thm:Fpires}). In words, $\Pires$ contains pairs of distributions, where each distribution in a pair cannot be expressed as a mixture containing the other. The proofs of the results below are in the Appendix.

\begin{restatable}[Unidentifiability]{lemma}{unidentifiability}
\label{lem:Fpiall}
Given a pair of mixtures $(\muu,\mul) \in \mc{F}(\Piall)$, let parameters $(\mpu,\mpl,\muzero,\muone)$ generate  $(\muu,\mul)$ and $\mpumax=\amax{\muu}{\mul},\mplmax=\amax{\mul}{\muu}$. It follows that 
\begin{enumerate}
 \setlength{\itemsep}{3pt}
  \setlength{\parskip}{2pt}
    \item \label{st:oneone} There is a one-to-one relation between $(\muzero,\muone)$ and $(\mpu,\mpl)$ and
\begin{equation}
\label{eq:mu01}
    \muzero = \frac{\mpl\muu-\mpu\mul}{\mpl-\mpu}, \quad \quad   \muone =\frac{(1-\mpu)\mul-(1-\mpl)\muu}{\mpl-\mpu}.
\end{equation}
\item \label{st:validmeasure} Both expressions on the right-hand side of Equation~\ref{eq:mu01} are well defined probability measures if and only if $\nicefrac{\mpu}{\mpl} \leq \mpumax$ and $\nicefrac{(1-\mpl)}{(1-\mpu)} \leq \mplmax$. 

\item \label{st:Atwoall} 
$\Atwo(\muu,\mul,\Piall)=\{(\mpu,\mpl):\nicefrac{\mpu}{\mpl} \leq \mpumax,\nicefrac{(1-\mpl)}{(1-\mpu)} \leq \mplmax \}$. 
\item \label{st:uniden1} $\mc{F}(\Piall)$ is unidentifiable in $(\mpu,\mpl)$; i.e., $(\mpu,\mpl)$ is not uniquely determined from $(\muu,\mul)$.
\item \label{st:uniden2} $\mc{F}(\Piall)$ is unidentifiable in $\mpu$ and $\mpl$, individually; i.e., neither $\mpu$ nor $\mpl$ is uniquely determined from $(\muu,\mul)$.
\end{enumerate}
\end{restatable}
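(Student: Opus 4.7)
The plan is to derive Parts~1 and~2 by solving the mixture equations as a $2 \times 2$ linear system and analyzing when the solutions are probability measures, assemble Part~3 directly from these two, and settle Parts~4--5 with a concrete example.

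For Part~1 I would view (\ref{eq:muu}) and (\ref{eq:mul}) as two equations in the ``unknowns'' $\muzero,\muone$, regarded as elements of the vector space of signed measures on $\Salgebra$. The coefficient matrix has determinant $\mpu-\mpl \neq 0$, so direct elimination---multiply (\ref{eq:muu}) by $\mpl$ and (\ref{eq:mul}) by $\mpu$ and subtract to solve for $\muzero$, and use weights $1-\mpu,1-\mpl$ symmetrically for $\muone$---yields the closed forms in (\ref{eq:mu01}). In particular, the map $(\mpu,\mpl) \mapsto (\muzero,\muone)$ is bijective for fixed $(\muu,\mul)$.

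For Part~2, each expression in (\ref{eq:mu01}) automatically has total mass $1$ (plug in the whole space), so the only question is non-negativity. The candidate $\muzero$ is a genuine probability measure iff the signed measure $\mpl\muu - \mpu\mul$ is non-negative, equivalently $\muu \geq (\mpu/\mpl)\mul$. The set $\{\alpha \in [0,1] : \muu \geq \alpha\mul\}$ contains $0$ and is downward-closed, since $\muu - \alpha'\mul = (\muu - \alpha\mul) + (\alpha-\alpha')\mul \geq 0$ whenever $\alpha' \leq \alpha$; its supremum is $\mpumax$ and is attained by a short monotone-limit argument, so feasibility is exactly $\mpu/\mpl \leq \mpumax$. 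The symmetric argument applied to the pair $(\mul,\muu)$ and the unknown $\muone$ gives $(1-\mpl)/(1-\mpu) \leq \mplmax$.

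Part~3 is then a direct assembly: $(\mpu,\mpl) \in \Atwo(\muu,\mul,\Piall)$ iff Part~1's formulas are valid probability measures, which by Part~2 is iff the two inequalities hold; the degenerate case $\muzero = \muone$, which would violate $(\muzero,\muone) \in \Piall$, cannot occur because it would force $\muu = \mul$, contradicting $(\muu,\mul) \in \mc{F}(\Piall)$. For Parts~4 and~5 I would exhibit $\muone,\muzero$ that are mutually singular together with some $0 < \mpu < \mpl < 1$; a direct calculation using mutual singularity then gives $\mpumax = \mpu/\mpl > 0$ and $\mplmax = (1-\mpl)/(1-\mpu) > 0$, so by Part~3 the pairs $(0,1)$, $(\mpumax,1)$, and $(0,1-\mplmax)$ all lie in $\Atwo$, producing two distinct $\mpu$-values and two distinct $\mpl$-values and thereby witnessing the joint and individual unidentifiability claims simultaneously. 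The main subtlety I anticipate is the measure-theoretic step in Part~2 confirming that the supremum $\mpumax$ is attained, so that the inequality in (\ref{eq:all1}) can be stated non-strictly; beyond that, the argument is essentially bookkeeping.
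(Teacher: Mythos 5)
Your proposal is correct and follows essentially the same route as the paper: solve the two mixture equations to obtain Equation~\ref{eq:mu01}, reduce well-definedness of the candidate components to the inequalities $\nicefrac{\mpu}{\mpl}\leq\mpumax$ and $\nicefrac{(1-\mpl)}{(1-\mpu)}\leq\mplmax$, read off Statement~\ref{st:Atwoall}, and then exhibit multiple pairs such as $(0,1)$ and $(\mpumax,1)$ in $\Atwo(\muu,\mul,\Piall)$ to witness Statements~\ref{st:uniden1}--\ref{st:uniden2}. The only minor deviations are that you argue directly from the definition of $\amax{\lambda}{\lambda_1}$ (downward closedness plus attainment of the supremum) where the paper invokes the infimum-of-ratios characterization imported from \cite{Jain2016}, and that you use a concrete mutually singular pair for the counterexample where the paper argues generically that $\mpumax,\mplmax>0$ whenever $0<\mpu<\mpl<1$.
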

Observe that the definition of $\amax{\lambda}{\lambda_1}$ and $\muu\neq\mul$ imply $\mpumax < 1$ and, consequently, any $(\mpu,\mpl)\in \Atwo(\muu,\mul,\Piall)$ satisfies $\mpu < \mpl$, as expected.

\vspace{-0.2cm}
\begin{restatable}[Identifiablity]{theorem}{identifiability}
\label{thm:Fpires}
Given $(\muu,\mul) \in \mc{F}(\Piall)$, let $\mpumax=\amax{\muu}{\mul}$ and $\mplmax=\amax{\mul}{\muu}$. Let $\muzerostar=\nicefrac{\paren*{\muu-\mpumax\mul}}{\paren*{1-\mpumax}}$, $\muonestar=\nicefrac{\paren*{\mul-\mplmax\muu}}{(1-\mplmax)}$ and
\begin{equation}
     \label{eq:correction}
          \mpustar = \nicefrac{\mpumax(1-\mplmax)}{\paren*{1-\mpumax\mplmax}}, \quad \quad \mplstar = \nicefrac{(1-\mplmax)}{\paren*{1-\mpumax\mplmax}}.
    \vspace{-0.2cm}
    \end{equation}
    It follows that
\begin{enumerate}
 \setlength{\itemsep}{3pt}
  \setlength{\parskip}{2pt}
    \item \label{st:generate} $(\mpustar,\mplstar,\muzerostar,\muonestar)$ generate $(\muu,\mul)$
    \item \label{st:fromPires} $(\muzerostar,\muonestar) \in \Pires$ and, consequently, $\mpustar=\amax{\muu}{\muonestar}$, $\mplstar=\amax{\mul}{\muonestar}$. 
     
    
     \item \label{st:samesets} $\mc{F}(\Pires)$ contains all pairs of mixtures in $\mc{F}(\Piall)$.
     \item \label{st:Atwores} $\Atwo(\muu,\mul,\Pires)=\{(\mpustar,\mplstar)\}$.
    \item \label{st:iden}$\mc{F}(\Pires)$ is identifiable in $(\mpu,\mpl)$; i.e., $(\mpu,\mpl)$ is uniquely determined from $(\muu,\mul)$.
\end{enumerate}
\end{restatable}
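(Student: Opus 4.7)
The plan is to prove the five assertions in the order listed, with statements~\ref{st:generate} and~\ref{st:fromPires} carrying essentially all of the work and the remaining three falling out quickly.

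Statement~\ref{st:generate} is a direct verification: substituting the explicit formulas for $\muzerostar$, $\muonestar$, $\mpustar$, $\mplstar$ into the mixture equations~\eqref{eq:muu} and~\eqref{eq:mul} reduces to routine algebra. The strict ordering $0\le\mpustar<\mplstar\le 1$ reduces to $\mpumax<1$, which holds since $\muu\neq\mul$, as remarked right after Lemma~\ref{lem:Fpiall}.

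For statement~\ref{st:fromPires} I would argue by contradiction. Suppose $\amax{\muzerostar}{\muonestar}=\gamma>0$, so $\muzerostar=\gamma\muonestar+(1-\gamma)\eta$ for some probability measure $\eta$. Plugging this into $\muu=\mpustar\muonestar+(1-\mpustar)\muzerostar$ gives $\muu=c\,\muonestar+(1-c)\eta$ with $c:=\mpustar+(1-\mpustar)\gamma$; then using $\muonestar=(\mul-\mplmax\muu)/(1-\mplmax)$ to eliminate $\muonestar$ and collecting $\muu$ on the left yields
\[
\muu \;=\; \frac{c}{K}\,\mul \;+\; \left(1-\frac{c}{K}\right)\eta,
\qquad K := 1-\mplmax(1-c).
\]
The key calculation is showing $c/K>\mpumax$ whenever $\gamma>0$; using the identity $\mpustar(1-\mpumax\mplmax)=\mpumax(1-\mplmax)$ built into the definition of $\mpustar$, this collapses to the trivial inequality $c>\mpustar$, which is automatic. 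The resulting decomposition of $\muu$ contradicts the maximality of $\mpumax=\amax{\muu}{\mul}$, so $\amax{\muzerostar}{\muonestar}=0$. A symmetric argument swapping the roles of $\muu\leftrightarrow\mul$, $\mpumax\leftrightarrow\mplmax$, $\muzerostar\leftrightarrow\muonestar$ gives $\amax{\muonestar}{\muzerostar}=0$, so $(\muzerostar,\muonestar)\in\Pires$. The identities $\mpustar=\amax{\muu}{\muonestar}$ and $\mplstar=\amax{\mul}{\muonestar}$ then follow from the decompositions in statement~\ref{st:generate}, since $\muzerostar$ now carries no $\muonestar$-component.

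Statement~\ref{st:samesets} is immediate from~\ref{st:generate} and~\ref{st:fromPires}. For statements~\ref{st:Atwores} and~\ref{st:iden} I would show that any $(\mpu,\mpl)\in\Atwo(\muu,\mul,\Pires)$ satisfies $\mpu/\mpl=\mpumax$ and $(1-\mpl)/(1-\mpu)=\mplmax$, which together uniquely force $(\mpu,\mpl)=(\mpustar,\mplstar)$. To see the first equality, suppose for contradiction that $\mpu/\mpl<\mpumax$ strictly. The function $r(\gamma)=[\mpu+\gamma(1-\mpu)]/[\mpl+\gamma(1-\mpl)]$ is continuous and strictly increasing in $\gamma$ with $r(0)=\mpu/\mpl$, so for some small $\gamma>0$ we still have $r(\gamma)\le\mpumax$, making $\muu-r(\gamma)\mul$ a valid positive subprobability measure. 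Combining this with the formulas of Lemma~\ref{lem:Fpiall} part~\ref{st:oneone} yields $\muzero=\gamma\muone+(1-\gamma)\mu'''$ for some probability measure $\mu'''$, contradicting $(\muzero,\muone)\in\Pires$. A symmetric argument gives $(1-\mpl)/(1-\mpu)=\mplmax$, and statement~\ref{st:iden} is then immediate from~\ref{st:Atwores}. The principal obstacle throughout is the algebraic step inside statement~\ref{st:fromPires}; fortunately it telescopes cleanly to $c>\mpustar$ once the key identity relating $\mpustar,\mplstar,\mpumax,\mplmax$ is invoked.
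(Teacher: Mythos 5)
Your proposal is correct, and it reaches the same intermediate milestones as the paper (well-definedness and the mixture identities for Statement \ref{st:generate}; $(\muzerostar,\muonestar)\in\Pires$ plus the consequence via the fact that a decomposition over a pair with $\amax{\muzero}{\muone}=0$ pins down the proportion; the identities $\nicefrac{\mpu}{\mpl}=\mpumax$ and $\nicefrac{(1-\mpl)}{(1-\mpu)}=\mplmax$ for any element of $\Atwo(\muu,\mul,\Pires)$), but the machinery you use for the two hard steps is genuinely different. The paper works throughout with the characterization $\amax{\lambda}{\lambda_1}=\inf\braces*{\nicefrac{\lambda(A)}{\lambda_1(A)}}$ (\autoref{thm:altChart}) and runs $\epsilon$--$A_\epsilon$ arguments over measurable sets: for Statement \ref{st:fromPires} it bounds ratios $\nicefrac{\mul(A)}{\muu(A)}$ to force $\amax{\muonestar}{\muzerostar}=0$, and for Statement \ref{st:Atwores} it manipulates pointwise ratios $\nicefrac{\muu(A)}{\mul(A)}$ to extract a positive lower bound on $\amax{\muzero}{\muone}$. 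You instead exploit that the maximum in the definition of $\amax{\cdot}{\cdot}$ is attained, and argue at the level of explicit measure decompositions: assuming $\amax{\muzerostar}{\muonestar}=\gamma>0$ you substitute and exhibit $\muu$ as a mixture containing a fraction $\nicefrac{c}{K}>\mpumax$ of $\mul$ (your algebra here checks out, including $K>0$, $\nicefrac{c}{K}\le 1$, and the reduction to $c>\mpustar$), and for Statement \ref{st:Atwores} your monotone function $r(\gamma)$ together with nonnegativity of $\muu-r(\gamma)\mul$ and the formulas of \autoref{lem:Fpiall} produces the forbidden decomposition $\muzero=\gamma\muone+(1-\gamma)\mu'''$. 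This buys a cleaner, more self-contained argument that never invokes the infimum characterization (Statement 1 of \autoref{thm:altChart}), at the cost of leaning on attainment of the max; the paper's ratio-based route is more in line with how $\mpumax,\mplmax$ are actually estimated. A few small points you should make explicit in a full write-up: in Statement \ref{st:generate} you also need $\mplmax<1$ (not just $\mpumax<1$) so that $\muonestar$ and $\mplstar>0$ are well defined, and the nonnegativity of $\muzerostar,\muonestar$ (immediate from the attained-max decompositions); in Statement \ref{st:Atwores} the direction $\nicefrac{\mpu}{\mpl}\le\mpumax$ that your contradiction argument presupposes comes from $\Atwo(\muu,\mul,\Pires)\subseteq\Atwo(\muu,\mul,\Piall)$ and Statement \ref{st:Atwoall} of \autoref{lem:Fpiall}, and nonemptiness of $\Atwo(\muu,\mul,\Pires)$ comes from Statements \ref{st:generate} and \ref{st:fromPires}. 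None of these are gaps in substance.
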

We refer to the expressions of $\muu$ and $\mul$ as mixtures of components $\muzero$ and $\muone$ as a max-canonical form when $(\muzero,\muone)$ is picked from $\Pires$. This form enforces that $\muone$ is not a mixture containing $\muzero$ and vice versa, which leads to $\muzero$ and $\muone$ having maximum separation, while still generating $\muu$ and $\mul$. Each pair of distributions in $\mc{F}(\Pires)$ is represented in this form. 
Identifiability of $\mc{F}(\Pires)$ in $(\mpu,\mpl)$ occurs precisely when $\Atwo(\muu,\mul,\Pires)=\{(\mpustar,\mplstar)\}$, i.e., $(\mpustar,\mplstar)$ is the only pair of mixing proportions that can appear in a max-canonical form of $\muu$ and $\mul$. Moreover, Statement \ref{st:generate} in \autoref{thm:Fpires} and Statement \ref{st:oneone} in \autoref{lem:Fpiall} imply that the max-canonical form is unique and completely specified by $(\mpustar,\mplstar,\muzerostar,\muonestar)$, with $\mpustar<\mplstar$ following from Equation \ref{eq:correction}. Thus, using $\mc{F}(\Pires)$ to model the unlabeled and labeled data distributions makes estimation of not only $\mpu$, the class prior, but also $\mpl,\muzero,\muone$ a well-posed problem. Moreover, due to Statement \ref{st:samesets} in \autoref{thm:Fpires}, there is no loss in the modeling capability by using $\mc{F}(\Pires)$ instead of $\mc{F}(\Piall)$. Overall, identifiability, absence of loss of modeling capability and maximum separation between $\muzero$ and $\muone$ combine to justify estimating $\mpustar$ as the class prior. 

\section{Univariate Transformation}

\begin{figure}
\includegraphics[width=\linewidth]{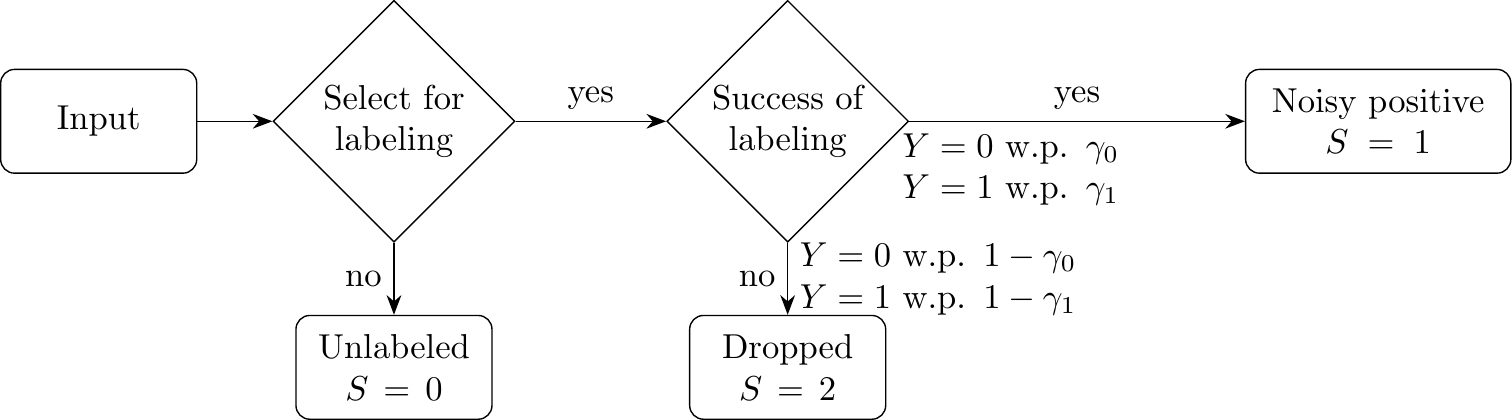}
\vspace{-1.25em}
\caption{\small The labeling procedure, with $S$ taking values from $\sset = \{0,1,2\}$. In the first step, the sample is randomly selected to attempt labeling, with some probability independent of $X$ or $Y$. 
If it is not selected, it is added to the ``Unlabeled" set. If it is selected, then labeling is attempted. If the true label is $Y=1$, then with probability $\gamma_1 \in (0,1)$, the labeling will succeed and it is added to ``Noisy positives". Otherwise, it is added to the ``Dropped" set. If the true label is $Y = 0$, then the attempted labeling is much more likely to fail, but because of noise, could succeed. The attempted label of $Y=0$ succeeds with probability $\gamma_0$, and is added to ``Noisy positives", even though it is actually a negative instance. $\gamma_0=0$ leads to the no noise case and the noise increases as $\gamma_0$ increases. $\beta=\nicefrac{\gamma_1 \alpha}{\paren*{\gamma_1 \alpha+\gamma_0(1-\alpha)}}$, gives the proportion of positives in the ``Noisy positives". }\label{fig_labeling}
\vspace{-0.3cm}
\end{figure}

The 
theory and algorithms for class prior estimation are agnostic to the dimensionality of the data; in practice, however, this dimensionality can have important consequences. 
Parametric Gaussian mixture models trained via expectation-maximization (EM) are known to strongly suffer from co-linearity in high-dimensional data. 
Nonparametric (kernel) density estimation is also known to have curse-of-dimensionality issues, both in theory \citep{liu2007sparse} and in practice \citep{scott2008curse}. 

We address the curse of dimensionality by transforming the data to a single dimension. The transformation $\score: \mathcal{X} \rightarrow \mathbb{R}$, surprisingly, is simply an output of a non-traditional classifier trained to separate labeled sample, $L$, from unlabeled sample, $U$. The transform is similar to that in \citep{Jain2016}, except that it is not required to be calibrated like a posterior distribution; as shown below, a good ranking function is sufficient. 
First, however, we introduce notation and formalize the data generation steps (Figure \ref{fig_labeling}). 


Let $X$ be a random variable taking values in $\xset$, capturing the true distribution of inputs, $\muu$, and 
$Y$ be an unobserved random variable taking values in $\yset$, giving the true class 
of the inputs. It follows that $X|\Smeq{Y}{0}$ and $X|\Smeq{Y}{1}$ are distributed according to $\muzero$ and $\muone$, respectively.  Let $S$ be a selection random variable, whose value in $\sset = \{0,1,2\}$ determines the sample to which an input $x$ is added (Figure \ref{fig_labeling}). When $\Smeq{S}{1}$, $x$ is added to the noisy labeled sample; when $\Smeq{S}{0}$, $x$ is added to the unlabeled sample; and when $\Smeq{S}{2}$, $x$ is not added to either of the samples. It follows that $\Xu=X|\Smeq{S}{0}$ and $\Xl=X|\Smeq{S}{1}$ are distributed according to $\muu$ and $\mul$, respectively. We make the following assumptions which are consistent with the statements above:

\vspace{-1.5em}
\begin{align}
&p(y|\regeq{S}{0})=p(y), \label{eq:SY0}\\
&p(y=1|\regeq{S}{1})=\mpl, \label{eq:SY1}\\ 
&p(x|s,y)=p(x|y). \label{eq:SXY} 
\end{align}
Assumptions \ref{eq:SY0} and \ref{eq:SY1} states that the proportion of positives in the unlabeled sample and the labeled sample matches the true proportion in $\muu$ and $\mul$, respectively. Assumption \ref{eq:SXY} states that the distribution of the positive inputs (and the negative inputs) in both the unlabeled and the labeled samples is equal and unbiased. \autoref{lem:assumptions} gives the implications of these assumptions. Statement \ref{st:seq2} in \autoref{lem:assumptions} is particularly interesting and perhaps counter-intuitive as it states that with non-zero probability some inputs need to be dropped. 

\begin{restatable}{lemma}{assumptions}
\label{lem:assumptions}
Let $X$, $Y$ and $S$ be random variables taking values in $\xset$, $\yset$ and $\sset$, respectively, and $\Xu=X|\Smeq{S}{0}$ and $\Xl=X|\Smeq{S}{1}$. For measures $\muu,\mul,\muzero,\muone$, satisfying Equations \ref{eq:muu} and \ref{eq:mul} and $\muone \neq \muzero$, let $\muu,\muzero,\muone$ give the distribution of $X$, $X|\Smeq{Y}{0}$ and $X|\Smeq{Y}{1}$, respectively. If $X,Y$ and $S$ satisfy assumptions \ref{eq:SY0}, \ref{eq:SY1} and \ref{eq:SXY}, then \begin{enumerate}
\vspace{-0.2cm}
 \setlength{\itemsep}{0pt}
  \setlength{\parskip}{0pt}
    \item \label{st:SX0} $X$ is independent of $S=0$; i.e., $p(x|S=0)=p(x)$
    \item \label{st:xuxl} $\Xu$ and $\Xl$ are distributed according to $\muu$ and $\mul$, respectively.
    \item \label{st:seq2} $p(S=2)\neq 0$.
    \vspace{-0.2cm}
\end{enumerate}

\end{restatable}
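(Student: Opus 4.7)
The plan is to handle the three statements in the order listed, with the first two falling out of the given assumptions by a direct conditioning argument, and the third following by contradiction from the law of total probability applied to $Y$.

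For Statement \ref{st:SX0}, I would write $p(x \mid S=0) = \sum_y p(x \mid S=0, y)\, p(y \mid S=0)$ and then apply assumption \ref{eq:SXY} to replace $p(x \mid S=0, y)$ by $p(x \mid y)$ and assumption \ref{eq:SY0} to replace $p(y \mid S=0)$ by $p(y)$. The resulting sum $\sum_y p(x \mid y) p(y)$ is precisely $p(x)$, proving independence. Here $p(y)$ is determined by $\alpha$ via the mixture $\mu_u$ giving the distribution of $X$: namely $p(Y=1) = \alpha$ and $p(Y=0) = 1-\alpha$.

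For Statement \ref{st:xuxl}, the $X^u$ claim is immediate from Statement \ref{st:SX0}, since $X^u = X \mid S=0$ has the same distribution as $X$, which is $\mu_u$ by hypothesis. For $X^l = X \mid S=1$, the same conditioning trick gives $p(x \mid S=1) = \sum_y p(x \mid y)\, p(y \mid S=1)$ using \ref{eq:SXY}; then \ref{eq:SY1} yields $p(Y=1 \mid S=1) = \beta$, so the expression collapses to the mixture $\beta p(x \mid Y=1) + (1-\beta) p(x \mid Y=0)$, which is the density of $\mu_l = \beta \muone + (1-\beta) \muzero$.

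For Statement \ref{st:seq2}, I would argue by contradiction. Suppose $p(S=2) = 0$, so that $p(S=0) + p(S=1) = 1$. The law of total probability applied to $Y = 1$ gives
\begin{equation*}
\alpha = p(Y=1) = p(Y=1 \mid S=0)\, p(S=0) + p(Y=1 \mid S=1)\, p(S=1) = \alpha\, p(S=0) + \beta\, p(S=1),
\end{equation*}
where I used assumption \ref{eq:SY0} (giving $p(Y=1 \mid S=0) = \alpha$) and assumption \ref{eq:SY1}. Substituting $p(S=0) = 1 - p(S=1)$ reduces this to $(\beta - \alpha)\, p(S=1) = 0$. Since $\mu_1 \neq \mu_0$ and the family $\mc{F}(\Piset)$ requires $\alpha < \beta$, the coefficient $\beta - \alpha$ is strictly positive, forcing $p(S=1) = 0$. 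But $X^l = X \mid S=1$ is required to be well-defined (and distributed as $\mul$, which differs from $\muu$), giving a contradiction. The only subtlety I anticipate is making this last step rigorous: I would note that $p(S=1) = 0$ makes the conditional distribution $X \mid S=1$ ill-defined, or equivalently any choice of it would be consistent with $\mul$, contradicting the assumed data generation in which $\mul$ is a specific mixture distinct from $\muu$.
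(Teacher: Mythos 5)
Your proofs of Statements \ref{st:SX0} and \ref{st:xuxl} are exactly the paper's: condition on $Y$, apply assumption \ref{eq:SXY} and then \ref{eq:SY0} (resp.\ \ref{eq:SY1}), and read off the mixtures. For Statement \ref{st:seq2}, however, you take a genuinely different and more elementary route. The paper computes $p(S=2|x)=1-p(S=0)-\frac{p(x|S=1)}{p(x)}p(S=1)$ and shows this cannot be constant in $x$: if $\nicefrac{p(x|S=1)}{p(x)}$ were a constant $c$, integrating forces $c=1$, hence $\mul=\muu$, hence $\muone=\muzero$, a contradiction. This proves the stronger fact that the drop event $S=2$ cannot be independent of $X$ (which implies $p(S=2)\neq 0$), but it works with densities and an integration step. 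Your argument instead applies total probability to $\{Y=1\}$ under the hypothesis $p(S=2)=0$, uses $p(Y=1|S=0)=p(Y=1)=\mpu$ and $p(Y=1|S=1)=\mpl$ to get $(\mpl-\mpu)p(S=1)=0$, and concludes $p(S=1)=0$, contradicting the implicit requirement that conditioning on $S=1$ (for $\Xl$ and assumption \ref{eq:SY1}) be well defined; this avoids densities entirely and is measure-theoretically cleaner, at the cost of proving only the stated claim rather than input-dependence of the dropping. Both routes lean on the same implicit hypotheses: you use $\mpl>\mpu$ (really $\mpl\neq\mpu$) and $p(Y=1)=\mpu$ (which is the definition of the class prior, or follows from \ref{eq:muu} and $\muone\neq\muzero$ by evaluating on a set where the two components differ), while the paper's step from $\mul=\muu$ to $\muone=\muzero$ likewise needs $\mpl\neq\mpu$, and its use of $p(x|S=1)$ needs $p(S=1)>0$. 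So your reliance on these points is no stronger than the paper's; just make the uniqueness argument for $p(Y=1)=\mpu$ and the positivity of $p(S=0),p(S=1)$ explicit when writing it up.
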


The proof is in the Appendix. Next, we highlight the conditions under which the score function $\score$ preserves $\mpustar$. Observing that $S$ serves as the pseudo class label for labeled vs.~unlabeled classification as well, we first give an expression for the posterior:
\begin{equation}
    \vspace{-0.2cm}
    \scorePost(x)=p(\smeq{S}{1}|x,\smin{S}{\{0,1\}}),\  \forall x \in \xset.\label{eq:scorePost}
    \vspace{-0.2cm}
\end{equation}
%
\begin{restatable}[$\alphamax$-preserving transform]{theorem}{transform}
\label{thm:transform}
Let random variables $X,Y,S,\Xu,\Xl$ and measures $\muu,\mul,\muzero,\muone$ be as defined in \autoref{lem:assumptions}. Let $\scorePost$ be the posterior as defined in \autoref{eq:scorePost} and $\score=H \circ \scorePost$, where $H$ is a 1-to-1 function on $[0,1]$ and $\circ$ is the composition operator. Assume
\begin{enumerate}
    \vspace{-0.2cm}
 \setlength{\itemsep}{0pt}
  \setlength{\parskip}{0pt}
\item $(\muzero,\muone)\in \Pires$,
\item $\Xu$ and $\Xl$ are continuous with densities $\mixu$ and $\mixl$, respectively,
\item $\tmuu,\tmul,\tmuone$ are the measures corresponding to $\score(\Xu),\score(\Xl),\score(\Xone)$, respectively, 
\item $(\mpumax, \mplmax, \mpustar,\mplstar)=(\amax{\muu}{\mul},\amax{\mul}{\muu},\amax{\muu}{\muone},\amax{\mul}{\muone})$ and $(\mpumm,\mplmm,\mpuss,\mplss)=(\amax{\tmuu}{\tmul},\amax{\tmul}{\tmuu},\amax{\tmuu}{\tmuone},\amax{\tmul}{\tmuone})$.
    \vspace{-0.2cm}
\end{enumerate}
Then
$$(\mpumm,\mplmm,\mpuss,\mplss)=(\mpumax, \mplmax, \mpustar,\mplstar)$$
%
and so $\score$ is an $\mpustar$-preserving transformation.

Moreover, $\scorePost$ can also be used to compute the true posterior probability:
\begin{equation}
\label{eq:posterior}
    p(\regeq{Y}{1}|x)=\frac{\mpustar(1-\mpustar)}{\mplstar-\mpustar}\paren*{\frac{p(S=0)}{p(S=1)}\frac{\scorePost(x)}{1-\scorePost(x)} - \frac{1-\mplstar}{1-\mpustar}}.
\end{equation}
\end{restatable}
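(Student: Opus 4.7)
The plan is to tackle the two claims separately: the formula for $p(Y=1|x)$ is essentially algebraic, while the equality of the four max-proportions rests on identifying $\score$ with the Neyman--Pearson density ratio.

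For the posterior formula, I apply Bayes' rule conditional on $S\in\{0,1\}$ to obtain $\scorePost(x)=\nicefrac{p(S=1)\mixl(x)}{\paren*{p(S=0)\mixu(x)+p(S=1)\mixl(x)}}$, which inverts to express $\mixl(x)/\mixu(x)$ in terms of $\scorePost(x)$ and $\nicefrac{p(S=0)}{p(S=1)}$. From the canonical decompositions $\mixu=\mpustar\cone+(1-\mpustar)\czero$ and $\mixl=\mplstar\cone+(1-\mplstar)\czero$, eliminating $\czero$ gives $\cone(x)=\nicefrac{\paren*{(1-\mpustar)\mixl(x)-(1-\mplstar)\mixu(x)}}{\paren*{\mplstar-\mpustar}}$. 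Substituting into $p(Y=1|x)=\mpustar\,\cone(x)/\mixu(x)$ and collecting the prefactor $\mpustar(1-\mpustar)/(\mplstar-\mpustar)$ should land exactly on \autoref{eq:posterior}; this part is routine algebra.

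For the max-proportion preservation, the central observation is that $\score$ and the likelihood ratio $r(x):=\cone(x)/\czero(x)$ generate the same $\sigma$-algebra, up to $\muu$-null sets. Indeed, $\scorePost$ is a strictly increasing function of $\mixl/\mixu$ by Bayes; the identity $\mixl/\mixu=(\mplstar r+1-\mplstar)/(\mpustar r+1-\mpustar)$ together with $\mplstar>\mpustar$ shows $\mixl/\mixu$ is strictly increasing in $r$; and $H$ is one-to-one on $[0,1]$. Consequently each of the four density ratios $\mixu/\cone,\,\mixl/\cone,\,\mixu/\mixl,\,\mixl/\mixu$ is a measurable function of $\score$. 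Using the characterization $\amax{\lambda}{\lambda_1}=\operatorname{ess\,inf}_{\lambda_1}\paren*{d\lambda/d\lambda_1}$, valid because the continuity assumption supplies densities, each of $\mpumax,\mplmax,\mpustar,\mplstar$ is the essential infimum, under the appropriate base measure, of a $\score$-measurable ratio. The pushforward identity $\muone\paren*{\phi\circ\score\in A}=\tmuone\paren*{\phi\in A}$ (and its analogues for $\muu,\mul,\muzero$) then preserves these essential infimums, which immediately yields $(\mpumm,\mplmm,\mpuss,\mplss)=(\mpumax,\mplmax,\mpustar,\mplstar)$. The same argument applied to $\amax{\muzero}{\muone}$ and $\amax{\muone}{\muzero}$ shows the pushforwards of $(\muzero,\muone)$ remain in $\Pires$, so \autoref{thm:Fpires} applied in the score space confirms $(\mpuss,\mplss)$ are the canonical-form priors there.

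The main obstacle is the essential-infimum preservation step: it depends on every relevant density ratio being a genuine $\sigma(\score)$-measurable function, which in turn requires careful handling of $\muu$-null sets where $\czero$ vanishes or $r$ is constant on sets of positive measure. Once this $\sigma$-algebra equivalence is discharged cleanly, the rest of the proof is Bayes' rule, elementary algebra, and an appeal to \autoref{thm:Fpires}.
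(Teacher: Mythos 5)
Your proposal is correct, and the posterior formula part is the same calculation as the paper's (Bayes gives $\nicefrac{\scorePost}{(1-\scorePost)}=\nicefrac{p(S=1)\mixl}{p(S=0)\mixu}$; eliminating $\czero$ from the two mixture densities and using $\mpu=\mpustar$, $\mpl=\mplstar$ under $(\muzero,\muone)\in\Pires$ is just a reorganization of the paper's algebra). Where you diverge is the preservation argument. The paper writes $\scorePost=G_1\circ\taudiv$ with $\taudiv=\nicefrac{\mixu}{\mixl}$ and $G_1$ one-to-one, invokes the restated Theorem 9 of \cite{Jain2016} (\autoref{thm:transformpdf}) twice to conclude $\mpumax=\mpumm$ and $\mplmax=\mplmm$, and then transfers to the starred pair through Statement \ref{st:fromPires} of \autoref{thm:Fpires} and \autoref{eq:correction}. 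You instead re-derive the content of that imported lemma: the essential-infimum characterization of $\amax{\lambda}{\lambda_1}$ (equivalent, given densities, to the $\inf R(\lambda,\lambda_1)$ characterization in \autoref{eq:altchar}), the observation that all four density ratios are measurable functions of $\score$ because $\score$ is a strictly monotone, invertible function of $r=\nicefrac{\cone}{\czero}$, and the fact that pushing forward along $\score$ preserves these essential infima. This buys two things: the argument is self-contained rather than resting on the cited lemma, and it applies directly to the pairs $(\muu,\muone)$ and $(\mul,\muone)$, so you obtain $\mpuss=\mpustar$, $\mplss=\mplstar$ without routing through \autoref{eq:correction}; your extra check that the pushforwards of $(\muzero,\muone)$ remain in $\Pires$ also makes explicit a step the paper leaves implicit (namely, why the correction formula is legitimately applied to the transformed measures). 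The cost is that the measure-theoretic bookkeeping the paper outsources to the lemma falls on you: the ess-inf identity itself, the identification of the pushforward density ratio with the induced function of $\score$, and the sets where $\czero$ (or, when $\mplstar=1$, $\cone$) vanishes, where the ratio must be set to $\infty$ as in \autoref{thm:transformpdf}. You flag this correctly; it is routine but must be written out for the proof to be complete.
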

The proof is in the Appendix. 
\autoref{thm:transform} shows that the $\mpustar$ is the same for the original data and the transformed data, if the transformation function $\score$ can be expressed as a composition of $\scorePost$ and a one-to-one function, $H$, defined on $[0,1]$.
Trivially, $\scorePost$ itself is one such function. 
We emphasize, however, that $\mpustar$-preservation is not limited by the efficacy of the calibration algorithm; uncalibrated scoring that ranks inputs as $\scorePost(x)$ also preserves $\mpustar$. 
\autoref{thm:transform} further demonstrates how the true posterior, $p(\smeq{Y}{1}|x)$, can be recovered from $\scorePost$ by plugging in estimates of $\scorePost$, $\nicefrac{p(S=0)}{p(S=1)}$, $\mpustar$ and $\mplstar$ in \autoref{eq:posterior}. 
The posterior probability $\scorePost$ can be estimated directly by using a probabilistic classifier or by calibrating a classifier's score \citep{Platt1999,niculescu2005obtaining}; $\nicefrac{|U|}{|L|}$ serves as an estimate of $\nicefrac{p(S=0)}{p(S=1)}$; \autoref{sec:algo} gives parametric and nonparametric approaches for estimation of $\mpustar$ and $\mplstar$. 

\section{Algorithms}
\label{sec:algo}
In this section, we derive a parametric and a nonparametric algorithm to estimate $\mpustar$ and $\mplstar$ from the unlabeled sample, $\DSu=\braces*{\Xu_i}$, and the noisy positive sample, $\DSl=\braces{\Xl_i}$. In theory, both approaches can handle multivariate samples; in practice, however, to circumvent the curse of dimensionality, we exploit the theory of $\mpustar$-preserving univariate transforms to transform the samples.  

\textbf{Parametric approach.} The parametric approach is derived by modeling each sample as a two component Gaussian mixture, sharing the same components but having different mixing proportions: 
\begin{align*}
    \Xu_i \sim \mpu \gauss(\locone,\sclone) + (1-\mpu)\gauss(\loczero,\sclzero)\\
    \Xl_i \sim \mpl \gauss(\locone,\sclone) + (1-\mpl)\gauss(\loczero,\sclzero)
\end{align*}
where $\locone,\loczero \in \mb{R}^\xdim$ and $\sclone,\sclzero \in \mb{S}_{++}^\xdim$, the set of all $\xdim \times \xdim$ positive definite matrices. The algorithm is an extension to the EM approach for Gaussian mixture models (GMMs) where, instead of estimating the parameters of a single mixture, the parameters of both mixtures $(\mpu,\mpl,\loczero, \locone, \sclzero,\sclone)$ are estimated simultaneously by maximizing the combined likelihood over both $\DSu$ and $\DSl$. This approach, which we refer to as a multi-sample GMM (\AlgNameGMM), exploits the constraint that the two mixtures share the same components. The update rules and their derivation are given in the Appendix.

\textbf{Nonparametric approach.} Our nonparametric strategy directly exploits the results of \autoref{lem:Fpiall} and \autoref{thm:Fpires}, which give a direct connection between $(\mpumax=\amax{\muu}{\mul}, \mplmax=\amax{\mul}{\muu}$) and $(\mpustar, \mplstar$). Therefore, for a two-component mixture sample, $\sampMix$, and a sample from one of the components, $\sampComp$, it only requires an algorithm to estimate the maximum proportion of $C$ in $M$. For this purpose, we use the \AlgName\ algorithm \citep{Jain2016}, briefly summarized in the Appendix. Specifically, our two-step approach for estimating $\mpustar$ and $\mplstar$ is as follows: ($i$) Estimate $\mpumax$ and $\mplmax$ as outputs of \AlgName$(\DSu,\DSl)$ and \AlgName$(\DSl,\DSu)$, respectively; ($ii$) Estimate $(\mpustar,\mplstar)$ from the estimates of $(\mpumax,\mplmax)$ by applying \autoref{eq:correction}. We refer to our nonparametric algorithm as \AlgNameC.

\section{Empirical investigation}
In this section we systematically evaluate the new algorithms in a controlled, synthetic setting as well as on a variety of data sets from the UCI Machine Learning Repository \citep{Lichman2013}.

\textbf{Experiments on synthetic data:} We start by evaluating all algorithms in a univariate setting where both mixing proportions, $\alpha$ and $\beta$, are known. We generate unit-variance Gaussian and unit-scale Laplace-distributed i.i.d.~samples and explore the impact of mixing proportions, the size of the component sample, and the separation and overlap between the mixing components on the accuracy of estimation. 
The class prior $\alpha$ was varied from $\{0.05, 0.25, 0.50 \}$ and the noise component $\beta$ from $\{1.00, 0.95, 0.75 \}$. The size of the labeled sample $L$ was varied from $\{100,1000\}$, whereas the size of the unlabeled sample $U$ was fixed at $10000$. 

\textbf{Experiments on real-life data:} We considered twelve real-life data sets from the UCI Machine Learning Repository. To adjust these data to our problems, categorical features were transformed into numerical using sparse binary representation, the regression data sets were transformed into classification based on mean of the target variable, and the multi-class classification problems were converted into binary problems by combining classes. In each data set, a subset of positive and negative examples was randomly selected to provide a labeled sample while the remaining data (without class labels) were used as unlabeled data. The size of the labeled sample was kept at $1000$ (or $100$ for small data sets) and the maximum size of unlabeled data was set $10000$. 

\textbf{Algorithms:} We compare the AlphaMax-N and MSGMM algorithms to the Elkan-Noto algorithm \citep{Elkan2008} as well as the noiseless version of AlphaMax \citep{Jain2016}. There are several versions of the Elkan-Noto estimator and each can use any underlying classifier. We used the $e_1$ alternative estimator combined with the ensembles of 100 two-layer feed-forward neural networks, each with five hidden units. 
The out-of-bag scores of the same classifier were used as a class-prior preserving transformation that created an input to the AlphaMax algorithms. It is important to mention that neither Elkan-Noto nor AlphaMax algorithm was developed to handle noisy labeled data. In addition, the theory behind the Elkan-Noto estimator restricts its use to class-conditional distributions with non-overlapping supports. The algorithm by \cite{duPlessis2014b} minimizes the same objective as the $e_1$ Elkan-Noto estimator and, thus, was not implemented.

\textbf{Evaluation:} All experiments were repeated 50 times to be able to draw conclusions with statistical significance. In real-life data, the labeled sample was created randomly by choosing an appropriate number of positive and negative examples to satisfy the condition for $\beta$ and the size of the labeled sample, while the remaining data was used as the unlabeled sample. Therefore, the class prior in the unlabeled data varies with the selection of the noise parameter $\beta$. The mean absolute difference between the true and estimated class priors was used as a performance measure. The best performing algorithm on each data set was determined by multiple hypothesis testing using the P-value of $0.05$ and Bonferroni correction.


\textbf{Results:} The comprehensive results for synthetic data drawn from univariate Gaussian and Laplace distributions are shown in Appendix (Table 2). In these experiments no transformation was applied prior to running any of the algorithms. As expected, the results show excellent performance of the MSGMM model on the Gaussian data. These results significantly degrade on Laplace-distributed data, suggesting sensitivity to the underlying assumptions. On the other hand, AlphaMax-N was accurate over all data sets and also robust to noise. These results suggest that new parametric and nonparametric algorithms perform well in these controlled settings.

Table 1 shows the results on twelve real data sets.
Here, AlphaMax and AlphaMax-N algorithms demonstrate significant robustness to noise, although the parametric version MSGMM was competitive in some cases. On the other hand, the Elkan-Noto algorithm expectedly degrades with noise. 
Finally, we investigated the practical usefulness of the $\alphamax$-preserving transform. \autoref{tab:multiD} (Appendix) shows the results of \AlgNameC \, and \AlgNameGMM\ on the real data sets, with and without using the transform. Because of computational and numerical issues, we reduced the dimensionality by using principal component analysis (the original data caused matrix singularity issues for MSGMM and density estimation issues for \AlgNameC). \AlgNameGMM\ deteriorates significantly without the transform, whereas \AlgNameC\ preserves some signal for the class prior. \AlgNameC\ with the transform, however, shows superior performance on most data sets. 

\renewcommand{\multirowsetup}{\centering}
\newlength{\LL} \settowidth{\LL}{Data}
\setlength{\tabcolsep}{3pt}

\begin{table*}[t]
\centering
\caption{\small Mean absolute difference between estimated and true mixing proportion over twelve data sets from the UCI Machine Learning Repository. Statistical significance was evaluated by comparing the Elkan-Noto algorithm, AlphaMax, AlphaMax-N, and the multi-sample GMM after applying a multivariate-to-univariate transform (MSGMM-T). The bold font type indicates the winner and the asterisk indicates statistical significance. For each data set, shown are the true mixing proportion ($\alpha$), true proportion of the positives in the labeled sample ($\beta$), sample dimensionality ($d$), the number of positive examples ($n_1$), the total number of examples ($n$), and the area under the ROC curve (AUC) for a model trained between labeled and unlabeled data. \vspace{0.1cm}}\footnotesize
\tracingtabularx
\begin{tabularx}{1.01\linewidth} 
{|>{\setlength{\hsize}{0.130000\hsize}}X| 
>{\setlength{\hsize}{0.065000\hsize}}X|
>{\setlength{\hsize}{0.065000\hsize}}X|
>{\setlength{\hsize}{0.065000\hsize}}X|
>{\setlength{\hsize}{0.050000\hsize}}X|
>{\setlength{\hsize}{0.060000\hsize}}X|
>{\setlength{\hsize}{0.080000\hsize}}X|
>{\setlength{\hsize}{0.1100\hsize}}X|
>{\setlength{\hsize}{0.1100\hsize}}X|
>{\setlength{\hsize}{0.1100\hsize}}X|
>{\setlength{\hsize}{0.1100\hsize}}X|
}\hline
\multirow{1}{1\LL}{Data} & $\mpu$ & $\mpl$& AUC & $d$ & $n_1$ & $n$ 
& \multicolumn{1}{c|}{ Elkan-Noto }
& \multicolumn{1}{c|}{ \AlgName }
& \multicolumn{1}{c|}{ \AlgNameC }
& \multicolumn{1}{c|}{ \AlgNameGMMT }
\\ 
 \hline\hline\multirow{3}{1\LL}{Bank}  & 0.095 & 1.00 & 0.842 & 13 & 5188 & 45000& 0.241 & 0.070 & \textbf{0.037}* & 0.163 
 \\ 
 & 0.096 & 0.95 & 0.819 & 13 & 5188 & 45000& 0.284 & 0.079 & \textbf{0.036}* & 0.155 
 \\ 
 & 0.101 & 0.75 & 0.744 & 13 & 5188 & 45000& 0.443 & 0.124 & \textbf{0.040}* & 0.127 
 \\ 
\hline 
\multirow{3}{1\LL}{Concrete}  & 0.419 & 1.00 & 0.685 & 8 & 490 & 1030& 0.329 & 0.141 & 0.181 & \textbf{0.077}* 
 \\ 
 & 0.425 & 0.95 & 0.662 & 8 & 490 & 1030& 0.363 & 0.174 & 0.231 & \textbf{0.095}* 
 \\ 
 & 0.446 & 0.75 & 0.567 & 8 & 490 & 1030& 0.531 & \textbf{0.212} & 0.272 & 0.233 
 \\ 
\hline 
\multirow{3}{1\LL}{Gas}  & 0.342 & 1.00 & 0.825 & 127 & 2565 & 5574& 0.017 & 0.011 & 0.017 & \textbf{0.008}* 
 \\ 
 & 0.353 & 0.95 & 0.795 & 127 & 2565 & 5574& 0.078 & 0.016 & \textbf{0.006} & 0.006 
 \\ 
 & 0.397 & 0.75 & 0.672 & 127 & 2565 & 5574& 0.396 & 0.137 & 0.009 & \textbf{0.006}* 
 \\ 
\hline 
\multirow{3}{1\LL}{Housing}  & 0.268 & 1.00 & 0.810 & 13 & 209 & 506& 0.159 & \textbf{0.087} & 0.094 & 0.209 
 \\ 
 & 0.281 & 0.95 & 0.777 & 13 & 209 & 506& 0.226 & \textbf{0.094} & 0.110 & 0.204 
 \\ 
 & 0.330 & 0.75 & 0.651 & 13 & 209 & 506& 0.501 & \textbf{0.125} & 0.134 & 0.172 
 \\ 
\hline 
\multirow{3}{1\LL}{Landsat}  & 0.093 & 1.00 & 0.933 & 36 & 1508 & 6435& 0.074 & 0.009 & \textbf{0.007}* & 0.157 
 \\ 
 & 0.103 & 0.95 & 0.904 & 36 & 1508 & 6435& 0.110 & 0.015 & \textbf{0.008}* & 0.152 
 \\ 
 & 0.139 & 0.75 & 0.788 & 36 & 1508 & 6435& 0.302 & 0.063 & \textbf{0.012}* & 0.143 
 \\ 
\hline 
\multirow{3}{1\LL}{Mushroom}  & 0.409 & 1.00 & 0.792 & 126 & 3916 & 8124& 0.029 & \textbf{0.015}* & 0.022 & 0.037 
 \\ 
 & 0.416 & 0.95 & 0.766 & 126 & 3916 & 8124& 0.087 & 0.015 & \textbf{0.008}* & 0.037 
 \\ 
 & 0.444 & 0.75 & 0.648 & 126 & 3916 & 8124& 0.370 & 0.140 & \textbf{0.020} & 0.024 
 \\ 
\hline 
\multirow{3}{1\LL}{Pageblock}  & 0.086 & 1.00 & 0.885 & 10 & 560 & 5473& 0.116 & \textbf{0.026}* & 0.044 & 0.129 
 \\ 
 & 0.087 & 0.95 & 0.858 & 10 & 560 & 5473& 0.137 & \textbf{0.031}* & 0.052 & 0.125 
 \\ 
 & 0.090 & 0.75 & 0.768 & 10 & 560 & 5473& 0.256 & \textbf{0.041}* & 0.064 & 0.111 
 \\ 
\hline 
\multirow{3}{1\LL}{Pendigit}  & 0.243 & 1.00 & 0.875 & 16 & 3430 & 10992& 0.030 & \textbf{0.006}* & 0.009 & 0.081 
 \\ 
 & 0.248 & 0.95 & 0.847 & 16 & 3430 & 10992& 0.071 & 0.011 & \textbf{0.005}* & 0.074 
 \\ 
 & 0.268 & 0.75 & 0.738 & 16 & 3430 & 10992& 0.281 & 0.093 & \textbf{0.007}* & 0.062 
 \\ 
\hline 
\multirow{3}{1\LL}{Pima}  & 0.251 & 1.00 & 0.735 & 8 & 268 & 768& 0.351 & 0.120 & \textbf{0.111} & 0.171 
 \\ 
 & 0.259 & 0.95 & 0.710 & 8 & 268 & 768& 0.408 & 0.118 & \textbf{0.110} & 0.168 
 \\ 
 & 0.289 & 0.75 & 0.623 & 8 & 268 & 768& 0.586 & \textbf{0.144} & 0.156 & 0.175 
 \\ 
\hline 
\multirow{3}{1\LL}{Shuttle}  & 0.139 & 1.00 & 0.929 & 9 & 8903 & 58000& \textbf{0.024}* & 0.027 & 0.029 & 0.157 
 \\ 
 & 0.140 & 0.95 & 0.903 & 9 & 8903 & 58000& 0.052 & \textbf{0.004}* & 0.007 & 0.157 
 \\ 
 & 0.143 & 0.75 & 0.802 & 9 & 8903 & 58000& 0.199 & 0.047 & \textbf{0.004}* & 0.148 
 \\ 
\hline 
\multirow{3}{1\LL}{Spambase}  & 0.226 & 1.00 & 0.842 & 57 & 1813 & 4601& 0.184 & 0.046 & \textbf{0.041} & 0.059 
 \\ 
 & 0.240 & 0.95 & 0.812 & 57 & 1813 & 4601& 0.246 & 0.059 & \textbf{0.042}* & 0.063 
 \\ 
 & 0.295 & 0.75 & 0.695 & 57 & 1813 & 4601& 0.515 & 0.155 & \textbf{0.044}* & 0.059 
 \\ 
\hline 
\multirow{3}{1\LL}{Wine}  & 0.566 & 1.00 & 0.626 & 11 & 4113 & 6497& 0.290 & 0.083 & \textbf{0.060} & 0.070 
 \\ 
 & 0.575 & 0.95 & 0.610 & 11 & 4113 & 6497& 0.322 & 0.113 & \textbf{0.063} & 0.076 
 \\ 
 & 0.612 & 0.75 & 0.531 & 11 & 4113 & 6497& 0.420 & 0.322 & 0.353 & \textbf{0.293} 
 \\ 
\hline 
\end{tabularx}
\normalsize
\vspace{-0.5cm}
\end{table*}

\section{Related work}\label{sec_related}

Class prior estimation in a semi-supervised setting, including positive-unlabeled learning, has been extensively discussed previously; see \cite{Saerens2002, Cortes2008, Elkan2008, Blanchard2010, Scott2013, Jain2016} and references therein. Recently, a general setting for label noise has also been introduced, called the mutual contamination model. The aim under this model is to estimate multiple unknown base distributions, using multiple random samples that are composed of different convex combinations of those base distributions \citep{katz2016mutual}. The setting of asymmetric label noise is a subset of this more general setting, treated under general conditions by \citet{Scott2013}, and previously investigated under a more restrictive setting as co-training \citep{Blum1998}. A natural approach is to use robust estimation to learn in the presence of class noise; this strategy, however, has been shown to be ineffective, both theoretically \citep{long2010random,manwani2013noise} and empirically \citep{hawkins1997high,bashir2005high}, indicating the need to explicitly model the noise. Generative mixture model approaches have also been developed, which explicitly model the noise \citep{lawrence2001estimating,bouveyron2009robust}; these algorithms, however, assume labeled data for each class. As the most related work, though \citet{Scott2013} did not explicitly treat the positive-unlabeled learning with noisy positives, their formulation can incorporate this setting by using $\pi_0 = \alpha$ and $\beta = 1-\pi_1$.  The theoretical and algorithmic treatment, however, is very different. Their focus is on identifiability and analyzing convergence rates and statistical properties, assuming access to some $\kappa^*$ function which can obtain proportions between samples. They do not explicitly address issues with high-dimensional data nor focus on algorithms to obtain $\kappa^*$. In contrast, we focus primarily on the univariate transformation to handle high-dimensional data and practical algorithms for estimating $\alpha^*$. Supervised learning used for class prior-preserving transformation provides a rich set of techniques to address high-dimensional data.

\section{Conclusion}
In this paper, we developed a practical algorithm for classification of positive-unlabeled data with noise in the labeled data set. In particular, we focused on a strategy for high-dimensional data, providing a univariate transform that reduces the dimension of the data, preserves the class prior so that estimation in this reduced space remains valid and is then further useful for classification. This approach provides a simple algorithm that simultaneously improves estimation of the class prior and provides a resulting classifier. We derived a parametric and a nonparametric version of the algorithm and then evaluated its performance on a wide variety of learning scenarios and data sets. To the best of our knowledge, this algorithm represents one of the first practical and easy-to-use approaches to learning with high-dimensional positive-unlabeled data with noise in the labels. 


\subsubsection*{Acknowledgements}
We thank Prof. Michael W. Trosset for helpful comments. Grant support: NSF DBI-1458477, NIH R01MH105524, NIH R01GM103725, and the Indiana University Precision Health Initiative.


\begingroup
    \setlength{\bibsep}{2pt}
   \linespread{1}\selectfont
    {\small
\bibliography{bib}
}
\endgroup

\appendix

\begin{center}
\textbf{\Large{Appendix}}
\end{center}

\newcommand{\myparagraph}[1]{\vspace{0.1cm} \textbf{#1} \hspace{0.2cm}}

\section{Identifiability proofs}
We will need the following Lemma for the proofs.
\begin{lemma}
\label{thm:altChart}
Let $\muu$, $\muone$ and $\muzero$ be three measures defined on $\Salgebra$ such that $\muu=\mpu\muone+(1-\mpu)\muzero$ for some $\mpu \in [0,1]$. Define
$$R(\muu,\muone) = \braces{\nicefrac{\muu(A)}{\muone(A)}: A \in \Salgebra, \muone(A)>0}.$$ 
It follows that
\begin{enumerate}
    \item \label{st:altChart} $\amax{\muu}{\muone} = \inf R(\muu,\muone).$
    \item \label{st:resmu0} If $\amax{\muzero}{\muone}=0$, then $\mpu= \amax{\muu}{\muone}$.
\end{enumerate}
\end{lemma}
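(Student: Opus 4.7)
For Statement~\ref{st:altChart}, my plan is to establish the equality by proving both inequalities. Let $\alphamax = \amax{\muu}{\muone}$ and $r = \inf R(\muu,\muone)$. For the inequality $\alphamax \leq r$, I would pick a valid decomposition $\muu = \alphamax \muone + (1-\alphamax)\mu_0'$ with $\mu_0' \in \pall$, and note that for any $A$ with $\muone(A) > 0$, nonnegativity of $\mu_0'$ gives $\muu(A) \geq \alphamax \muone(A)$, so $\muu(A)/\muone(A) \geq \alphamax$, hence $r \geq \alphamax$. For the reverse, I would show $r\muone \leq \muu$ as set functions: on sets with $\muone(A)>0$ this follows directly from the definition of $r$, and on sets with $\muone(A) = 0$ it is trivial. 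Using $A = \xset$ gives $r \leq 1$. If $r < 1$, then $\mu_0' := (\muu - r\muone)/(1-r)$ is a well-defined probability measure (total mass $(1-r)/(1-r)=1$), producing a valid decomposition and so $\alphamax \geq r$; the boundary $r=1$ forces $\muu = \muone$ and hence $\alphamax = 1 = r$.

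For Statement~\ref{st:resmu0}, the approach is a short contradiction argument that leverages Statement~\ref{st:altChart} only implicitly. Since $\muu = \mpu\muone + (1-\mpu)\muzero$ is itself a decomposition of $\muu$ of the required form, $\amax{\muu}{\muone} \geq \mpu$. Suppose strict inequality: $\amax{\muu}{\muone} = \alpha > \mpu$, with $\muu = \alpha\muone + (1-\alpha)\mu_0''$ for some $\mu_0'' \in \pall$. Then equating the two representations of $\muu$ and solving for $\muzero$ (here $\mpu < 1$ because $\mpu < \alpha \le 1$ and if $\mpu = 1$ we already have $\alphamax = 1 = \mpu$), I get
\begin{equation*}
\muzero = \frac{\alpha - \mpu}{1-\mpu}\,\muone + \frac{1-\alpha}{1-\mpu}\,\mu_0'',
\end{equation*}
where the coefficients are nonnegative and sum to $1$. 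This exhibits $\muzero$ as a mixture containing $\muone$ with strictly positive weight $(\alpha-\mpu)/(1-\mpu) > 0$, so $\amax{\muzero}{\muone} > 0$, contradicting the hypothesis. Hence $\amax{\muu}{\muone} = \mpu$.

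The main obstacle is the reverse direction of Statement~\ref{st:altChart}: one must verify that the infimum of the ratios is actually attained, so that the maximum in the definition of $\amax{\cdot}{\cdot}$ is well-defined. This reduces to checking that the signed set function $\muu - r\muone$ is genuinely a nonnegative measure (rather than merely having nonnegative ratios on a dense family of sets), which is immediate here because the inequality $r\muone(A) \leq \muu(A)$ propagates to all $A \in \Salgebra$ without any approximation argument. The edge cases $r = 1$ and $\mpu = 1$ require brief separate handling, but the core argument is the construction of the explicit decomposition $\mu_0' = (\muu - r\muone)/(1-r)$, which simultaneously proves attainment and the inequality $\alphamax \geq r$.
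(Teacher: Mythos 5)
Your proof is correct, but it takes a different route from the paper: the paper does not argue this lemma directly at all, it simply invokes Lemma 4 and Theorem 3 of the cited prior work (Jain et al., 2016), whereas you give a self-contained elementary argument. Your two directions for Statement 1 (nonnegativity of the residual component forces every feasible $\alpha$ below the ratio infimum; conversely $\muu - r\muone$ is a nonnegative, countably additive set function of total mass $1-r$, so $(\muu - r\muone)/(1-r)\in\pall$ when $r<1$, with $r=1$ forcing $\muu=\muone$) are exactly the content the citation outsources, and your construction also settles attainment of the max, which the paper's definition of $\amax{\lambda}{\lambda_1}$ quietly presupposes. For Statement 2 your rearrangement $\muzero = \frac{\alpha-\mpu}{1-\mpu}\muone + \frac{1-\alpha}{1-\mpu}\mu_0''$ is a clean contradiction argument and matches the spirit of the cited result. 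The benefit of your version is that the lemma becomes verifiable within the paper; the cost is length, and one small presentational tightening: in the $\amax{\muu}{\muone}\le\inf R$ direction, run the argument for an arbitrary feasible $\alpha$ (not for $\alphamax$ itself), so that you never use the maximizer before attainment is established; combined with your explicit decomposition at $r$ this gives max $=\inf R$ with no circularity. Also note the implicit assumption (consistent with the paper's usage) that $\muzero,\muone\in\pall$, since the lemma statement says only ``measures.''
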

\begin{proof}
The proof follows from Lemma 4 and Theorem 3 of \cite{Jain2016}. 
\end{proof}

As a consequence of Statement 1 in \autoref{thm:altChart}, we use an alternate characterization of $\amax{\lambda}{\lambda_1}$
\begin{equation}
\amax{\lambda}{\lambda_1}=\inf R(\lambda,\lambda_1) \label{eq:altchar}   
\end{equation}
where $R(\lambda,\lambda_1) = \braces{\nicefrac{\lambda(A)}{\lambda_1(A)}: A \in \Salgebra, \lambda_1(A)>0}$.

\unidentifiability*
\begin{proof}
\vspace{-0.4cm}
 \paragraph{Statement \ref{st:oneone}:} Because $(\mpu,\mpl,\muzero,\muone)$ generate $\muu$ and $\mul$, $\mpl>\mpu$ and Equations \ref{eq:muu} and \ref{eq:mul} hold. $\muzero$ can be expressed in terms of $\muu, \mul, \mpu,\mpl$ by eliminating $\muone$ ($\mpl \times$ (Equation \ref{eq:muu}) $- \mpu \times$ (Equation \ref{eq:mul})). Similarly $\muone$ can be expressed in terms of $\muu, \mul, \mpu,\mpl$ and Equation \ref{eq:mu01} follows. Thus, given $\muu$ and $\mul$, $\muzero,\muone$ can be uniquely determined from $\mpu,\mpl$.  Equation \ref{eq:muu} implies that for all $A \in \Salgebra$ such that $\muone(A)-\muzero(A) > 0$, $\mpu=\nicefrac{(\muu(A)-\muone(A))}{(\muone(A)-\muzero(A))}$. Existence of $A$ is guaranteed because $\muone \neq \muzero$ from the definition of $\Piall$. Thus, given $\muu$, $\mpu$ is uniquely determined from $\muzero, \muone$. Similarly, Equation \ref{eq:mul} implies that, given $\mul$, $\mpl$ is uniquely determined from $\muzero, \muone$. 
 
 \myparagraph{Statement \ref{st:validmeasure}:} It is easy to observe that $\nicefrac{(\mpl\muu-\mpu\mul)}{(\mpl-\mpu)}$ is a valid probability measure, if and only if $\mpl\muu(A)-\mpu\mul(A)\geq0$ for all $A \in \Salgebra$. Because the inequality is trivially true when $\mul(A)=0$, the necessary and sufficient condition can be reduced to $\mpl\muu(A)-\mpu\mul(A)\geq0$ for all $A \in \Salgebra$ with $\mul(A) > 0$. This can be rewritten as $\nicefrac{\mpu}{\mpl} \leq \nicefrac{\muu(A)}{\mul(A)}$ for all $A \in \Salgebra$ such that $\mul(A)>0$. Equivalently, $\nicefrac{\mpu}{\mpl}$ is a lower bound to $R(\muu,\mul)$; in other words, $\nicefrac{\mpu}{\mpl} \leq \mpumax$ due to the alternate characterization given by equation \ref{eq:altchar}. Similarly, $\nicefrac{((1-\mpu)\mul-(1-\mpl)\muu)}{(\mpl-\mpu)}$ is a valid probability measure provided $\nicefrac{(1-\mpl)}{(1-\mpu)} \leq \mplmax$. 
 
 \myparagraph{Statement \ref{st:Atwoall}:} Because system of equations in \ref{eq:mu01} is essentially equivalent to equations \ref{eq:muu} and \ref{eq:mul}, $(\mpu,\mpl)\in \Atwo(\muu,\mul,\Piall)$ if and only if, $\muzero$ and $\muone$ as defined in equations \ref{eq:mu01} come from $\Piall$. When the conditions of statement \ref{st:validmeasure} are not satisfied, $\muzero$, $\muone$ are not well-defined probability measures and consequently $(\muzero,\muone)\notin \Piall$. On the other hand, when they are satisfied, $\muzero$, $\muone$ are well-defined probability measures. Moreover, $\muu\neq\mul$ implies $\muzero \neq \muone$ and consequently $(\muzero,\muone) \in \Piall$.

 \myparagraph{Statement \ref{st:uniden2}:} We construct mixtures $\ddot{\muu}$ and $\ddot{\mul}$ such that there are multiple values for $\mpu$ that generate the mixtures. For $0<\ddot{\mpu}<\ddot{\mpl} <1$ and probability measures $\ddot{\mu}_0,\ddot{\mu}_1$ on $\Salgebra$, let $(\ddot{\mpu},\ddot{\mpl},\ddot{\mu}_0,\ddot{\mu}_1)$ generate mixtures $\ddot{\muu}$ and $\ddot{\mul}$. Thus, $(\ddot{\mpu},\ddot{\mpl}) \in \Atwo(\ddot{\muu},\ddot{\mul},\Piall)$. It follows that $\ddot{\mpu}^{+} (=\amax{\ddot{\muu}}{\ddot{\mul}})$ and $\ddot{\mpl}^{+} (=\amax{\ddot{\mul}}{\ddot{\muu}})$ are both strictly greater than $0$ because $\ddot{\mpu}^{+}=0$ implies $\ddot{\mpu}=0$ and $\ddot{\mpl}^{+}=0$ implies $\ddot{\mpl}=1$ (using statement \ref{st:Atwoall}), which contradicts our assumption. Now, $(0,1)\in \Atwo(\ddot{\muu},\ddot{\mul},\Piall)$ follows trivially from statement \ref{st:Atwoall}. It is easy to observe that $(\mpu,1)\in \Atwo(\ddot{\muu},\ddot{\mul},\Piall)$ for any $\mpu \in [0,\ddot{\mpu}^{+}]$. Thus, there are multiple values for $\mpu$ that generate $\ddot{\muu}$ and $\ddot{\mul}$. Similarly, $(0,\mpl) \in \Atwo(\ddot{\muu},\ddot{\mul},\Piall)$ for any $\mpl \in [1-\ddot{\mpl}^{+},1]$ and there are multiple values for $\mpl$ that generate $\ddot{\muu}$ and $\ddot{\mul}$. 
 
  \myparagraph{Statement \ref{st:uniden1}:} Statement \ref{st:uniden1} follows trivially from statement \ref{st:uniden2} and also by observing that $\Atwo(\ddot{\muu},\ddot{\mul},\Piall)$ cannot be a singleton set because $\ddot{\mpu}^{+},\ddot{\mpl}^{+}>0$.
\end{proof}

\identifiability*
\begin{proof} 
\vspace{-0.4cm}
 \paragraph{Statement \ref{st:generate}:}
First, we show that $\muzerostar,\muonestar$ are well defined probability measures.\\
$\bm{\mpumax\neq 1,\mplmax\neq 1}$: Suppose $\mpumax=1$. It follows that $\muu=\mul$ and, consequently, from \autoref{eq:mu01}, $\muzero=\muone=\mul$. However, $\muzero\neq \muone$ because they are picked from $\Piall$. Thus, $\mpumax\neq 1$ by contradiction. Similarly $\mplmax \neq 1$. Thus, the denominator in the R.H.S. of $\muzerostar$ and $\muonestar$ is not $0$.\\
By definition $\mpumax=\inf R(\muu,\mul)$. Thus, $\mpumax \leq \nicefrac{\muu(A)}{\mul(A)}$ when $\mul(A)>0$. Consequently, $\muu(A) - \mpumax \mul(A)\geq0$. The inequality is trivially true when $\mul(A)=0$. Thus, $\muu(A) - \mpumax \mul(A)\geq0$ for all $A \in \Salgebra$. Hence, $\muzerostar$ is a probability measure. Similarly, $\muonestar$ is also a probability measure.\\
Second, we show that $(\mpustar,\mplstar,\muzerostar,\muonestar)$ generate $\muu,\mul$.\\
$\bm{(\mpustar,\mplstar,\muzerostar,\muonestar) \rightarrow (\muu,\mul)}$: Observe that $\muzerostar,\muonestar$ can also be expressed as $\muzerostar =\frac{\mplstar\muu-\mpustar\mul}{\mplstar-\mpustar}$ and  $\muonestar =\frac{(1-\mpustar)\mul-(1-\mplstar)\muu}{\mplstar-\mpustar}$. Moreover, after some algebraic manipulation of Equations labeled \ref{eq:correction}, $\nicefrac{\mpustar}{\mplstar} = \mpumax$ and $\nicefrac{(1-\mplstar)}{(1-\mpustar)} = \mplmax$ can be derived. Thus, from \autoref{lem:Fpiall} (statements \ref{st:oneone} and \ref{st:Atwoall}),  $(\mpustar,\mplstar,\muzerostar,\muonestar)$ generate $\muu,\mul$.

\myparagraph{Statement \ref{st:fromPires}:} We show that $\amax{\muonestar}{\muzerostar}=0$ and $\amax{\muzerostar}{\muonestar}=0$, giving $(\muzerostar,\muonestar) \in \Pires$.\\
We start by showing that ${\amax{\muonestar}{\muzerostar}=0}$. Suppose, for some $\epsilon>0$ and
\begin{align*}
     &\mul(A)-\mplmax \muu(A) \geq \epsilon(\muu(A)-\mpumax\mul(A)) \ \tag{for all $A\in\Salgebra$} \\ 
     &\Rightarrow (1+\epsilon\mpumax) \mul(A)  \geq (\epsilon+\mplmax)\muu(A)\\
     &\Rightarrow \nicefrac{\mul(A)}{\muu(A)} \geq \nicefrac{(\epsilon+\mplmax)}{(1+\epsilon\mpumax)}  \tag{when $\muu(A)>0$}
\end{align*}
Thus, $\nicefrac{(\epsilon+\mplmax)}{(1+\epsilon\mpumax)}$ is a lower bound to $R(\mul,\muu)$ and, consequently, $\mplmax \geq \nicefrac{(\epsilon+\mplmax)}{(1+\epsilon\mpumax)}$. However, because $\mpumax\mplmax < 1$, $\nicefrac{(\epsilon+\mplmax)}{(1+\epsilon\mpumax)} > \nicefrac{(\epsilon\mpumax\mplmax+\mplmax)}{(1+\epsilon\mpumax)} =\mplmax$. This is a contradiction. Thus, for all $\epsilon>0$ there exists some $A_\epsilon \in \Salgebra$ such that $\mul(A_\epsilon)-\mplmax \muu(A_\epsilon) < \epsilon(\muu(A_\epsilon)-\mpumax\mul(A_\epsilon))$. We now divide the inequality on both sides by $\muu(A_\epsilon)-\mpumax\mul(A_\epsilon)$, observing that the divisor is strictly greater than $0$ because $\mul(A_\epsilon)-\mplmax \muu(A_\epsilon) \geq 0$ as $\muonestar$ is a probability measure.  Thus,
\begin{align*}
    &\nicefrac{(\mul(A_\epsilon)-\mplmax \muu(A_\epsilon))}{(\muu(A_\epsilon)-\mpumax\mul(A_\epsilon))} < \epsilon\\
    &\Rightarrow \nicefrac{\muonestar(A_\epsilon)}{\muzerostar(A_\epsilon)} < \nicefrac{\epsilon(1-\mpumax)}{(1-\mplmax)}.
\end{align*}
Because the choice of $\epsilon$ is arbitrary and $\mplmax\neq 1$, any lower bound of $R(\muonestar,\muzerostar)$ cannot be greater than $0$. Thus, $\amax{\muonestar}{\muzerostar} = 0$. 

A similar argument shows that ${\amax{\muzerostar}{\muonestar}=0}.$
Therefore, $\amax{\muonestar}{\muzerostar}=0$ and $\amax{\muzerostar}{\muonestar}=0$; consequently, $(\muzerostar,\muonestar) \in \Pires$. Because $\muzerostar$ is not a mixture containing $\muonestar$ and because $\muu=\mpustar \muonestar + (1-\mpustar)\muzerostar$, from Statement \ref{st:resmu0} in \autoref{thm:altChart} we get that $\mpustar=\amax{\muu}{\muonestar}$. Similarly, $\mplstar=\amax{\mul}{\muonestar}$. 

\myparagraph{Statement \ref{st:samesets}:}
Because statements \ref{st:generate} and \ref{st:fromPires} are true for any $(\muu,\mul) \in \mc{F}(\Piall)$, statement \ref{st:samesets} is true.

\myparagraph{Statement \ref{st:Atwores}:} 
It follows from statements \ref{st:generate} and \ref{st:fromPires}, that $(\mpustar,\mplstar) \in \Atwo(\muu,\mul,\Pires)$. To show that $\Atwo(\muu,\mul,\Pires)$ contains no other element, we give a proof by contradiction. Suppose $(\mpu,\mpl)\in \Atwo(\muu,\mul,\Pires)$ and $(\mpu,\mpl)\neq (\mpustar,\mplstar)$. Let $(\mpu,\mpl, \muzero,\muone)$ generate $(\muu,\mul)$, for some  $(\muzero,\muone)\in\Pires$. First, we show that $\mpumax=\nicefrac{\mpu}{\mpl}$ and $\mplmax=\nicefrac{(1-\mpl)}{(1-\mpu)}$:\\
$\bm{\mpumax=}\nicefrac{\bm{\mpu}}{\bm{\mpl}}$: Suppose for some $0<\epsilon <\nicefrac{(\mpl-\mpu)}{\mpl(1-\mpl)}$ and all $A\in \Salgebra$ where $\mul(A)>0$,
\begin{align*}
    &\nicefrac{\muu(A)}{\mul(A)}\geq \nicefrac{\mpu}{\mpl} + \epsilon\\
    &\Rightarrow \frac{\mpu + (1-\mpu)\nicefrac{\muzero(A)}{\muone(A)}}{\mpl+(1-\mpl)\nicefrac{\muzero(A)}{\muone(A)}} \geq \frac{\mpu + \mpl\epsilon}{\mpl}\\
     &\Rightarrow \nicefrac{\muzero(A)}{\muone(A)}  \geq \nicefrac{\mpl^2\epsilon}{(\mpl-\mpu - \epsilon\mpl(1-\mpl))}\\
     &\Rightarrow  \amax{\muzero}{\muone} \geq \nicefrac{\mpl^2\epsilon}{(\mpl-\mpu - \epsilon\mpl(1-\mpl))}>0
\end{align*}
However, this is a contradiction because $(\muzero,\muone)\in \Pires$. Thus, for every $0 < \epsilon < \nicefrac{(\mpl-\mpu)}{\mpl(1-\mpl)}$ there exists some $A_\epsilon\in\Salgebra$ with $\mul(A_\epsilon)>0$ such that $\nicefrac{\muu(A_\epsilon)}{\mul(A_\epsilon)}< \nicefrac{\mpu}{\mpl} + \epsilon$. Thus, $\mpumax < \nicefrac{\mpu}{\mpl} + \epsilon$, because $\epsilon$ can be made arbitrarily small, $\mpumax \leq \nicefrac{\mpu}{\mpl}$. However, because $\Atwo(\muu,\mul,\Pires) \subseteq \Atwo(\muu,\mul,\Piall)$, $(\mpu,\mpl)$ also belongs to $\Atwo(\muu,\mul,\Piall)$ and consequently $\mpumax \geq \nicefrac{\mpu}{\mpl}$ from \autoref{lem:Fpiall} (statement \ref{st:Atwoall}). Thus, $\mpumax = \nicefrac{\mpu}{\mpl}$.\\
$\bm{\mplmax=}\nicefrac{\bm{(1-\mpl)}}{\bm{(1-\mpu)}}$: The proof is similar to $\mpumax = \nicefrac{\mpu}{\mpl}$ \textemdash supposing $\nicefrac{\mul(A)}{\muu(A)}\geq \nicefrac{(1-\mpl)}{(1-\mpu)} + \epsilon$ for some $\epsilon >0$ and all $A\in \Salgebra$ with $\muu(A)>0$, reaching a contradiction and following the subsequent steps.\\
$\mpumax=\nicefrac{\mpu}{\mpl}$, $\mplmax=\nicefrac{(1-\mpl)}{(1-\mpu)}$ and \autoref{eq:correction} implies  $\mpu = \mpustar$ and $\mpl=\mplstar$, which contradicts our assumption. Hence, $(\mpustar,\mplstar)$ is the only element in $\Atwo(\muu,\mul,\Pires)$, which proves statement \ref{st:Atwores}.

\myparagraph{Statement \ref{st:iden}:}
 This statement follows by observing that $\Atwo(\muu,\mul,\Pires)$ is a singleton set.
\end{proof}

\section{Proofs for properties of the univariate transform}

We first prove \autoref{lem:assumptions}, which we can then use to construct a suitable univariate transform. 

\assumptions*
\begin{proof}
\vspace{-0.4cm}
 \paragraph{Statement \ref{st:SX0}:}
Observe that
\begin{align*}
    p(x|S=0)&= p(x,Y=1|S=0)+p(x,Y=0|S=0)\\
    &= p(Y=1|S=0)p(x|Y=1,S=0) + p(Y=0|S=0)p(x|Y=0,S=0)\\
    &=p(Y=1)p(x|Y=1) +p(Y=0)p(x|Y=0) \tag{from assumptions \ref{eq:SY0} and \ref{eq:SXY}}\\
    &=p(x).
\end{align*}
Thus, $X$ is independent of $S=0$. 

 \paragraph{Statement \ref{st:xuxl}:}
From statement \ref{st:SX0}, $\Xu$ has the same distribution as $X$, which is $\muu$. Now, 
\begin{align*}
    p(x|S=1)&= p(x,Y=1|S=1)+p(x,Y=0|S=1)\\
    &= p(Y=1|S=1)p(x|Y=1,S=1) + p(Y=0|S=1)p(x|Y=0,S=1)\\
    &=\mpl p(x|Y=1) +(1-\mpl)p(x|Y=0). \tag{from assumptions \ref{eq:SY1} and \ref{eq:SXY}}
\end{align*}
Thus, the distribution of $X|S=1$ is $\mpl \muone + (1-\mpl) \muzero$, which is $\mul$. 

 \paragraph{Statement 3:}
Now,
\begin{align*}
p(S=2|x) &=1- p(S=0|x) - p(S=1|x)\\
&= 1- p(S=0) - \frac{p(x|S=1)}{p(x)}p(S=1) \tag{because $S=0$ and $X$ are independent}
\end{align*}
\noindent The probability $p(S=2 |x)$ is independent of $x$ only if $\nicefrac{p(x|S=1)}{p(x)}$ is a constant with respect to $x$. Let $\nicefrac{p(x|S=1)}{p(x)} = c$, where $c$ is some constant. Integrating over $x$ on both sides gives $\int_{\mathcal{X}} p(x|S=1) dx = c \int_{\mathcal{X}} p(x) dx$. Since both integrals are $1$, it follows that $c = 1$. Thus, $\nicefrac{p(x|S=1)}{p(x)}=1$, which implies $\muu=\mul$; i.e., the labeled and unlabeled samples have the same distribution. However, this implies $\muone = \muzero$, which contradicts the assumption. Therefore, $S=2$ is not independent of $X$. 
\end{proof}

To prove the main result about the preservation properties of the univariate transform, we will
make use of the following theorem.

\begin{lemma}[Restatement of Theorem 9 in \cite{Jain2016}]
\label{thm:transformpdf}
Let $X$ and $X_1$ be random variables with densities $\mixu$ and $\cone$ and measures $\muu$ and $\muone$ respectively. For $\overline{\mb{R}}^+=\mb{R}^+ \cup \{0,\infty\}$ and an abstract space $\xsett$,
given any one-to-one function $G: \overline{\mb{R}}^+ \rightarrow \xsett$,
define function $\tau: \xset \rightarrow \xsett$
 
 \begin{minipage}{.5\linewidth}
 \begin{equation*}
 \tau = G \circ \taudiv,
 \end{equation*}
 \end{minipage}
  \begin{minipage}{.5\linewidth}
  \begin{lefttwolines}
  where
 \begin{align*}
 \taudiv(x) =  \begin{cases}
         \mixu(x) / \cone(x) & \mbox{if $\cone(x) > 0$}\\
        \infty & \mbox{if $\cone(x) = 0$}.
        \end{cases}
 \end{align*}
 \end{lefttwolines}
 \end{minipage}
 
 Let $\tmuu$ and $\tmuone$ be the measures for the random variables $\tau(X)$,
$\tau(X_1)$ respectively for $\sigma$-algebra $\Salgebra_\tau$ on $\xsett$.  
Then $\amax{\muu}{\muone} = \amax{\tmuu}{\tmuone}$.
\end{lemma}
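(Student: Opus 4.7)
I will prove the two inequalities $\amax{\tmuu}{\tmuone} \ge \amax{\muu}{\muone}$ and $\amax{\tmuu}{\tmuone} \le \amax{\muu}{\muone}$ separately, using throughout the alternate characterization $\amax{\lambda}{\lambda_1} = \inf R(\lambda,\lambda_1)$ supplied by Lemma~\ref{thm:altChart}.

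For the direction $\amax{\tmuu}{\tmuone} \ge \amax{\muu}{\muone}$, I exploit that $\tmuu$ and $\tmuone$ are the pushforwards of $\muu$ and $\muone$ under $\score$. For every $A_\tau \in \Salgebra_\tau$ with $\tmuone(A_\tau) > 0$, the set $B := \score^{-1}(A_\tau) \in \Salgebra$ satisfies $\muone(B) = \tmuone(A_\tau) > 0$ and $\muu(B) = \tmuu(A_\tau)$, so $\tmuu(A_\tau)/\tmuone(A_\tau) \in R(\muu,\muone)$. Hence $R(\tmuu,\tmuone) \subseteq R(\muu,\muone)$, and taking infima gives the inequality.

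For the reverse direction, set $\alpha_0 := \amax{\muu}{\muone}$ and, for each $\epsilon > 0$, consider $B_\epsilon := \{x \in \xset : \taudiv(x) \le \alpha_0 + \epsilon\} \in \Salgebra$. Its image under $\score$ is $A_\tau^\epsilon := G(\{s \in \overline{\mb{R}}^+ : s \le \alpha_0 + \epsilon\})$, and the one-to-oneness of $G$ gives $\score^{-1}(A_\tau^\epsilon) = B_\epsilon$. Two claims finish the argument. First, on $B_\epsilon$ one has the pointwise bound $\mixu(x) \le (\alpha_0 + \epsilon)\cone(x)$ (note $\cone(x) > 0$ whenever $\taudiv(x) < \infty$), so integrating yields $\muu(B_\epsilon) \le (\alpha_0 + \epsilon)\muone(B_\epsilon)$. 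Second, $\muone(B_\epsilon) > 0$; otherwise $\taudiv > \alpha_0 + \epsilon$ would hold $\muone$-a.e., which by the same integration trick forces $\muu(A) \ge (\alpha_0 + \epsilon)\muone(A)$ for every $A \in \Salgebra$ with $\muone(A) > 0$, contradicting $\inf R(\muu,\muone) = \alpha_0$. Combining these, $\tmuu(A_\tau^\epsilon)/\tmuone(A_\tau^\epsilon) = \muu(B_\epsilon)/\muone(B_\epsilon) \le \alpha_0 + \epsilon$, and hence $\amax{\tmuu}{\tmuone} = \inf R(\tmuu,\tmuone) \le \alpha_0 + \epsilon$; sending $\epsilon \to 0^+$ completes the proof.

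The main obstacle I anticipate is the measurability of $A_\tau^\epsilon$ inside the unspecified $\sigma$-algebra $\Salgebra_\tau$. I plan to adopt the natural choice $\Salgebra_\tau = \{A_\tau \subseteq \xsett : \score^{-1}(A_\tau) \in \Salgebra\}$ — the final $\sigma$-algebra making $\score$ measurable — under which $A_\tau^\epsilon$ lies in $\Salgebra_\tau$ automatically because $B_\epsilon = \score^{-1}(A_\tau^\epsilon) \in \Salgebra$ (the measurability of $B_\epsilon$ itself comes from $\taudiv$ being a Borel function of the densities). Two minor corner cases are easily handled: the pathological set $\{\cone = 0\}$, where $\taudiv = \infty$, never meets $B_\epsilon$ for any finite $\alpha_0 + \epsilon$, so it affects neither bound; and the degenerate possibility $\alpha_0 = 0$ is absorbed into the $\epsilon \to 0^+$ limit since $\amax{\tmuu}{\tmuone} \ge 0$ by definition.
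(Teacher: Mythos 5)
Your argument is correct. Note first that the paper itself offers no proof of this lemma; it is imported wholesale as Theorem 9 of \cite{Jain2016}, so your write-up is a self-contained reconstruction rather than a variant of anything in this manuscript. Both directions check out. The inclusion $R(\tmuu,\tmuone)\subseteq R(\muu,\muone)$ via preimages is exactly the right way to get $\amax{\tmuu}{\tmuone}\geq\amax{\muu}{\muone}$, and your use of the sublevel sets $B_\epsilon=\{\taudiv\leq\alpha_0+\epsilon\}$ for the reverse inequality is sound: the pointwise bound $\mixu\leq(\alpha_0+\epsilon)\cone$ on $B_\epsilon$ integrates correctly, and your contradiction argument for $\muone(B_\epsilon)>0$ works because on $B_\epsilon^c$ one has $\mixu\geq(\alpha_0+\epsilon)\cone$ pointwise (including where $\cone=0$), so $\muone(B_\epsilon)=0$ would force every ratio in $R(\muu,\muone)$ to be at least $\alpha_0+\epsilon$, contradicting $\inf R(\muu,\muone)=\alpha_0$. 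The injectivity of $G$ is used exactly where it is needed, namely to identify $\score^{-1}(A_\tau^\epsilon)$ with $B_\epsilon$.

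Your flagged concern about $\Salgebra_\tau$ is genuine and worth keeping: as stated, the lemma is false for an arbitrary $\sigma$-algebra on $\xsett$ (take $\Salgebra_\tau$ trivial and $\amax{\tmuu}{\tmuone}=1$ regardless of $\amax{\muu}{\muone}$), since only the first inequality survives coarsening. Your resolution --- taking $\Salgebra_\tau$ to be the final $\sigma$-algebra $\{A_\tau:\score^{-1}(A_\tau)\in\Salgebra\}$, which makes $A_\tau^\epsilon$ measurable because $B_\epsilon$ is a sublevel set of the measurable function $\taudiv$ --- is the natural reading and matches the intent of the cited result, where $\xsett$ is effectively $\mathbb{R}$ with its Borel sets. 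No gaps beyond that under-specification, which you already identified.
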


\transform*
\begin{proof}
First we prove that $(\mpumax,\mplmax)=(\mpumm,\mplmm)$. To this end, we expand $\scorePost(x)$ as follows
%
\begin{align}
\scorePost(x)&= \frac{p(\smeq{S}{1},x,\smin{S}{\{0,1\}})}{p(x,\smin{S}{\{0,1\}})} \notag\\
&=\frac{p(\smeq{S}{1},x)}{p(x,\smeq{S}{0})+p(x, \smeq{S}{1})}\notag\\
&=\frac{p(x|\smeq{S}{1})p(\smeq{S}{1})}{p(x|\smeq{S}{0})p(\smeq{S}{0})+p(x|\smeq{S}{1})p(\smeq{S}{1})} \label{eq:forPosterior}\\
&=\frac{p(\smeq{S}{1})}{\frac{p(x|\smeq{S}{0})}{p(x|\smeq{S}{1})}p(\smeq{S}{0})+p(\smeq{S}{1})}\notag\\
&=\frac{p(\smeq{S}{1})}{\frac{\mixu(x)}{\mixl(x)}p(\smeq{S}{0})+p(\smeq{S}{1})}\label{eq:forPreserving}
\end{align}
the last step is justified because $\mixu(x)=p(\smeq{X}{x}|\smeq{S}{0})$ and $\mixl(x)=p(\smeq{X}{x}|\smeq{S}{1})$. 

Consider one-to-one functions $G_1(t)=\frac{p(\smeq{S}{1})}{tp(\smeq{S}{0})+p(\smeq{S}{1})}$ and $G_2(t)=\frac{p(\smeq{S}{1})}{\frac{1}{t}p(\smeq{S}{0})+p(\smeq{S}{1})}$ defined on $\mb{R}^+ \cup \{0,\infty\}$. 
We can apply \autoref{thm:transformpdf} to $G_1$ to get $\mpumax=\mpumm$ and to $G_2$ to get $\mplmax=\mplmm$. 
To satisfy the condition of \autoref{thm:transformpdf} for $G_1$, let
    $\Xu$, $\Xl$ and $H \circ G_1$ play the role of $X$, $X_1$ and $G$, respectively. Now,
    \begin{align*}
        \score &= H \circ \scorePost\\
        &= H \circ G_1 \circ \tau_d \tag{from \autoref{eq:forPreserving}}
    \end{align*}
    Because $H$ and $G_1$ are both one-to-one functions, $H \circ G_1$ is one-to-one as well. Thus, all the conditions of \autoref{thm:transformpdf} are satisfied and, consequently, $\mpumax=\mpumm$.
     
     Similarly, we can use \autoref{thm:transformpdf} with $\Xu$, $\Xl$ and $H \circ G_2$ playing the role of $X_1$, $X$ and $G$, respectively, now giving that $\mplmax=\mplmm$.

From Statement \ref{st:fromPires} in \autoref{thm:Fpires}  and \autoref{eq:correction}
\begin{align*}
\mpustar &= \nicefrac{\mpumax(1-\mplmax)}{\paren*{1-\mpumax\mplmax}}\\
 \mplstar &= \nicefrac{(1-\mplmax)}{\paren*{1-\mpumax\mplmax}}\\
\mpuss &= \nicefrac{\mpumm(1-\mplmm)}{\paren*{1-\mpumm\mplmm}}\\
\mplss &= \nicefrac{(1-\mplmm)}{\paren*{1-\mpumm\mplmm}}
.
 \end{align*}
 and, thus, $(\mpuss,\mplss)=(\mpustar,\mplstar)$. 

Next we prove \autoref{eq:posterior}. Let $c=\nicefrac{p(\smeq{S}{0})}{p(\smeq{S}{1})}$. Rearranging the terms of \autoref{eq:forPosterior}, 
\begin{align*}
    &\frac{\tau(x)}{1-\tau(x)}=\frac{p(x|\smeq{S}{1})p(\smeq{S}{1})}{p(x|\smeq{S}{0})p(\smeq{S}{0})}\\
    &=\frac{p(\smeq{S}{1})}{p(\smeq{S}{0})}\paren*{\frac{p(x,\smeq{Y}{1}|\smeq{S}{1})+ p(x,\smeq{Y}{1}|\smeq{Y}{0})}{p(x)}} \tag{from \autoref{lem:assumptions} statement \ref{st:SX0}}\\
    &=\frac{p(\smeq{S}{1})}{p(\smeq{S}{0})}\paren*{p(\smeq{Y}{1}|\smeq{S}{1})\frac{p(x|\smeq{Y}{1},\smeq{S}{1})}{p(x)}+p(\smeq{Y}{0}|\smeq{S}{1})\frac{p(x|\smeq{Y}{0},\smeq{S}{1}))}{p(x)}}\\
    &=\frac{p(\smeq{S}{1})}{p(\smeq{S}{0})}\paren*{p(\smeq{Y}{1}|\smeq{S}{1})\frac{p(x|\smeq{Y}{1})}{p(x)}+p(\smeq{Y}{0}|\smeq{S}{1})\frac{p(x|\smeq{Y}{0})}{p(x)}} \tag{from assumption \ref{eq:SXY}}\\
     &=\frac{1}{c}\paren*{\frac{p(\smeq{Y}{1}|\smeq{S}{1})}{p(\smeq{Y}{1})}p(\smeq{Y}{1}|x)+\frac{p(\smeq{Y}{0}|\smeq{S}{1})}{p(\smeq{Y}{0})}p(\smeq{Y}{0}|x)}\\
     &=\frac{1}{c}\paren*{\frac{\mplstar}{\mpustar}p(\smeq{Y}{1}|x)+\frac{1-\mplstar}{1-\mpustar}(1-p(\smeq{Y}{1}|x))}\\
      &=\frac{1}{c}\paren*{\frac{1-\mplstar}{1-\mpustar} + \paren*{\frac{\mplstar}{\mpustar}-\frac{1-\mplstar}{1-\mpustar}}p(\smeq{Y}{1}|x)}.\\
\end{align*}
Rearranging the terms,
\begin{equation*}
    p(\smeq{Y}{1}|x)=\frac{\mpustar(1-\mpustar)}{\mplstar-\mpustar}\paren*{\frac{c\tau(x)}{1-\tau(x)} - \frac{1-\mplstar}{1-\mpustar}}.
    \vspace{-0.4cm}
\end{equation*}
\end{proof}

\section{\AlgNameGMM}
 Let $\DSu=\braces*{\Xu_i}$ and  $\DSl=\braces{\Xl_i}$ be the unlabeled sample and the noisy positive sample, respectively. The parametric approach is derived by modeling each sample as a two component Gaussian mixture, sharing the same components but having different mixing proportions: 
\begin{align*}
    \Xu_i \sim \mpu \gauss(\locone,\sclone) + (1-\mpu)\gauss(\loczero,\sclzero)\\
    \Xl_i \sim \mpl \gauss(\locone,\sclone) + (1-\mpl)\gauss(\loczero,\sclzero)
\end{align*}
where $\locone,\loczero \in \mb{R}^\xdim$ and $\sclone,\sclzero \in \mb{S}_{++}^\xdim$, the set of all $\xdim \times \xdim$ positive definite matrices. The algorithm is an extension to the EM approach for Gaussian mixture models (GMMs) where, instead of estimating the parameters of a single mixture, the parameters of both mixtures $(\mpu,\mpl,\loczero, \locone, \sclzero,\sclone)$ are estimated simultaneously by maximizing the combined likelihood over both $\DSu$ and $\DSl$. This approach, that we refer to as a multi-sample GMM (\AlgNameGMM), exploits the constraint that the two mixtures share the same components.\\
To derive the update equations, we introduce missing variables $\Wu_i, \Wl_j$ that give the true class of the $i^{th}$ and $j^{th}$ example in $U$ and $L$, respectively. The variables $\Wu_i, \Wl_j$ are Bernoulli distributed; i.e., $\Wu_i\sim \textrm{Bernoulli}(\mpu)$ and $\Wl_j\sim \textrm{Bernoulli}(\mpl)$. 
For 
\begin{align*}
W=\{\Wu_i\}_{i=1}^{|\DSu|}, V=\{\Wl_j\}_{j=1}^{|\DSl|}
\end{align*}
the quartet $(U,L,W,V)$ forms the observed and unobserved variables in the EM framework. The complete data log-likelihood, $ll_C$ is given by,
\begin{equation*}
\begin{aligned}
    ll_C &= \textstyle\sum_{i=1}^{|\DSu|} \Wu_i \log\brac*{\mpu\phione(\xu_i)} + (1-\Wu_i) \log\brac*{(1-\mpu)\phizero(\xu_i)} \\
    &+\textstyle\sum_{i=1}^{|\DSl|} \Wl_i \log\brac*{\mpl\phione(\xl_i)} + (1-\Wl_i) \log\brac*{(1-\mpl)\phizero(\xl_i)}, 
\end{aligned}
\end{equation*}
where $\phi_i$ is the density of $\mc{N}(u_i,\Sigma_i)$. 
Our goal is to maximize $E[ll_C]$. To do so
we take the conditional expectation of $ll_C$ with respect to $W$ and $V$ given $U$ and $L$. For 
\begin{align*}
    \wgtu_i = E[\Wu_i|\Smeq{\Xu_i}{\xu_i}]=\frac{\mpu\phione(\xu_i)}{\mpu\phione(\xu_i)+(1\smmin\mpu)\phizero(\xu_i)},\\
    \wgtl_i = E[\Wl_i|\Smeq{\Xl_i}{\xl_i}]=\frac{\mpl\phione(\xl_i)}{\mpl\phione(\xl_i)+(1\smmin\mpl)\phizero(\xl_i)},
\end{align*}
we obtain
\begin{align*}
    E[ll_C] &= \sum_{i=1}^{|\DSu|} \wgtu_i \log\brac*{\mpu\phione(\xu_i)} + (1-\wgtu_i) \log\brac*{(1-\mpu)\phizero(\xu_i)} \\
    &\hspace{1em}  + \sum_{i=1}^{|\DSl|} \wgtl_i \log\brac*{\mpl\phione(\xl_i)} + (1\smmin\wgtl_i) \log\brac*{(1\smmin\mpl)\phizero(\xl_i)}
\end{align*}
which up to constants, that are ignored in the optimization, can be explicitly written as
\begin{align*}
 E[ll_C] =&\sum_{i=1}^{|\DSu|} \wgtu_i \brac*{ \log\mpu \smmin\frac{1}{2}\log |\sclone| \smmin (\xu_i\smmin\locone)^{T}\sclone^{-1}(\xu_i\smmin\locone) }\\
    &+  \sum_{i=1}^{|\DSu|} (1\smmin\wgtu_i) \brac*{ \log(1\smmin\mpu) \smmin\frac{1}{2}\log |\sclzero| \smmin(\xu_i\smmin\loczero)^{T}\sclzero^{-1}(\xu_i\smmin\loczero) }\\
    &+  \sum_{i=1}^{|\DSl|} \wgtl_i \brac*{ \log\mpl \smmin\frac{1}{2}\log |\sclone| \smmin (\xl_i\smmin\locone)^{T}\sclone^{-1}(\xl_i\smmin\locone) }\\
    &+  \sum_{i=1}^{|\DSl|} (1\smmin\wgtl_i) \brac*{ \log(1\smmin\mpl) \smmin \frac{1}{2}\log |\sclzero| \smmin (\xl_i\smmin\loczero)^{T}\sclzero^{-1}(\xl_i\smmin\loczero)}.
\end{align*}

%
Finally, we obtain the parameter update equations by maximizing $E[ll_C]$ with respect to $(\mpu,\mpl,\loczero,\locone,\sclzero,\sclone)$: 
\begin{align*}
    \new{\mpu} &\leftarrow \nicefrac{1}{|\DSu|}\textstyle \sum_{i=1}^{|\DSu|} \old{\wgtu}_i \\
    \new{\mpl} &\leftarrow \textstyle \nicefrac{1}{|\DSl|} \sum_{j=1}^{|\DSl|} {\old{\wgtl}_j}  \\
    \new{\locone}&\leftarrow \frac{\sum_{i=1}^{|\DSu|} \old{\wgtu}_i\xu_i + \sum_{j=1}^{|\DSl|} \old{\wgtl}_j\xl_j }{\sum_{i=1}^{|\DSu|} \old{\wgtu}_i + \sum_{j=1}^{|\DSl|} \old{\wgtl}_j}\\
    \new{\loczero}&\leftarrow \frac{\sum_{i=1}^{|\DSu|} (1-\old{\wgtu}_i)\xu_i + \sum_{j=1}^{|\DSl|} (1-\old{\wgtl}_j)\xl_j}{\sum_{i=1}^{|\DSu|} (1-\old{\wgtu}_i) + \sum_{j=1}^{|\DSl|} (1-\old{\wgtl}_j)}\\
    \new{\sclone}&\leftarrow \frac{\sum_{i=1}^{|\DSu|} \old{\wgtu}_i(\xu_i-\old{\locone})(\xu_i-\old{\locone})^{T} + \sum_{j=1}^{|\DSl|} \old{\wgtl}_j(\xl_j-\old{\locone})(\xl_j-\old{\locone})^{T}}{\sum_{i=1}^{|\DSu|} \old{\wgtu}_i + \sum_{j=1}^{|\DSl|} \old{\wgtl}_j}\\
    \new{\sclzero}&\leftarrow \frac{\sum_{i=1}^{|\DSu|} (1-\old{\wgtu}_i)(\xu_i-\old{\loczero})(\xu_i-\old{\loczero})^{T} + \sum_{j=1}^{|\DSl|} (1-\old{\wgtl}_j)(\xl_j-\old{\loczero})(\xl_j-\old{\loczero})^{T} }{\sum_{i=1}^{|\DSu|} (1-\old{\wgtu}_i) + \sum_{j=1}^{|\DSl|} (1-\old{\wgtl}_j)}
 \end{align*}
The update rules reduce to the standard GMM when the labeled sample is not provided. Further generalization to more than two samples and/or mixing components is straightforward.
\section{\AlgName}
For a mixture sample $M$ and a component sample $C$, \AlgName$(M,C)$ estimates the maximum proportion of $C$ in $M$ \citep{Jain2016}. \AlgName \ is based on the constrained maximization of the log likelihood of samples $\sampMix$ and $\sampComp$,  derived using nonparametric estimates of their densities $\densMix$ and $\densComp$, respectively. We list the main steps of \AlgName\ below.
\begin{enumerate}[leftmargin=0.5cm,itemindent=.5cm]
    \item Estimate $\densComp$ nonparameterically as $\estComp$ using sample $\sampComp$. Obtain the weights, $\mixwt_i$, and components $\kappa_i$ from nonparametric density estimation of $\densMix$ as a $\nkernels$-component mixture, $\estMix(x)= \sum_{i=1}^\nkernels \mixwt_i\kappa_i(x)$, using $\sampMix$. 
    \item Construct two density functions $\estCompll(\cdot|\rewtvec)$ and $\estMixll(\cdot|\rewtvec)$ from $\mixwt_i,\kappa_i$ and $\estComp$ parameterized by a $\nkernels$-dimensional weight vector $\rewtvec=[\rewt_i]$, $0\leq \rewt_i\leq 1$, which re-weights components $\kappa_i$:
    \begin{align*}
\estCompll(x|\rewtvec) &= \frac{\sum_{i=1}^\nkernels \rewt_i\mixwt_i\kappa_i(x)}{\sum_{i=1}^\nkernels \rewt_i\mixwt_i}, \\
\estMixll(x|\rewtvec) &= \paren*{\sum_{i=1}^\nkernels \rewt_i\mixwt_i}\estComp(x)+\paren*{1-\sum_{i=1}^\nkernels \rewt_i\mixwt_i}\estCompll(x|1-\rewtvec);
\end{align*}
\item Maximize the log likelihood of $\sampMix$ and $\sampComp$ constructed with $\estMixll$ and $\estCompll$ under the constraint $\sum_{i=1}^\nkernels \rewt_i \mixwt_i = r$ for many values of $r$ equispaced in $[0,1]$.
\begin{align*}
ll_r&=\max_{w.r.t.\  \rewtvec} \sum_{x \in M} \log \estMixll(x|\rewtvec) + \sum_{x \in C} \log  \estCompll(x|\rewtvec),\\
    & \text{subject to}\begin{array}{ll}
         &  \sum_{i=1}^\nkernels \rewt_i \mixwt_i = r, \\
         & 0\leq \rewt_i\leq 1, i=1,\ldots,\nkernels. 
    \end{array}
\end{align*}
\item Estimate the maximum proportion of $\densComp$ in $\densMix$, $\amax{\densMix}{\densComp}$ (minor abuse of notation\footnote{$\amax{\densMix}{\densComp}$ is a minor abuse of notation; replacing the densities with the underlying measures makes the notation correct.}), as the $x$-coordinate of the elbow in the $ll_r$ versus $r$ graph. 
\end{enumerate}

The densities $\estMixll(\cdot|\rewtvec)$ and $\estCompll(\cdot|\rewtvec)$ are constructed to approximate $\densMix$ and $\densComp$. The efficacy of the approximation depends on the value of $\rewtvec$; there exists $\rewtvec$ such that $\estMixll(\cdot|\rewtvec)$ and $\estCompll(\cdot|\rewtvec)$ are good approximations provided $\sum_{i=1}^\nkernels \rewt_i\mixwt_i \leq \amax{\densMix}{\densComp}$, however, the approximation deteriorates progressively, even with the optimum $\rewtvec$, as $\sum_{i=1}^\nkernels \rewt_i\mixwt_i$ moves beyond  $\amax{\densMix}{\densComp}$. This suggests that the graph of $ll_r$ versus $r$ should be approximately a flat line from $0$ to $\amax{\densMix}{\densComp}$ and decrease progressively beyond $\amax{\densMix}{\densComp}$ exposing an elbow at $\amax{\densMix}{\densComp}$, which is detected in the last step. The pseudo code for elbow detection is provided in \citep{Jain2016}.

The practical implementation, backed by the $\alphamax$-preservation theory, reduces the dimension of the data to a single dimension by using the scoring function of a non-traditional classifier and employs histograms as the nonparametric method to obtain $\estMix$ and $\estComp$. The bin-width is chosen to cover the component sample's (after the transformation) range and reveal the shape of its distribution, using the default option in Matlab's \verb+histogram+ function. More bins with the same bin-width are subsequently added to cover the mixture sample's range. 

The total computation includes the time to (a) train a classifier (b) perform density estimation in 1D and (c) perform optimization in AlphaMax. The complexity for (a) varies; for neural networks it is $O(n)$ per epoch. For (b) we used $k$ bin histograms, where $k$ can be $O(\log(n))$ to $O(\sqrt{n})$ depending on the bandwidth selection rule, giving $O(nk)$ complexity. For (c), (size $k$ optimization) the computation of the objective and gradient is $O(nk)$ per step; e.g., LBFGS is $O(nk)$ per step. An execution of AlphaMax takes about 10 minutes on a laptop computer for the Shuttle data set (for results reported in Table 1; i.e., 1000 labeled and 10000 unlabeled examples); 12 hours on the entire data (8903 labeled and 49097 unlabeled examples).

\section{Empirical results}

\paragraph{Results for univariate synthetic data.}

The results for synthetic data, with scalar inputs, are summarized in Table 2.

\FloatBarrier
\begin{table*}[!h]
\centering
\caption{\small Mean absolute difference between estimated and true mixing proportion over a selection of true mixing proportions and the following data sets: $\mathcal{N}$ = Gaussian with $\Delta \mu \in \{1, 2, 4\}$, $\mathcal{L}$ = Laplace with $\Delta \mu \in \{1, 2, 4\}$. Statistical significance was evaluated by comparing the Elkan-Noto algorithm, \AlgName,\ \AlgNameC,\  and the multi-sample Gaussian Mixture Model (\AlgNameGMM). The bold font type indicates the winner and the asterisk indicates statistical significance.\vspace{0.1cm}}\footnotesize
\tracingtabularx
\begin{tabularx}{1.01\linewidth} 
{|>{\setlength{\hsize}{0.100000\hsize}}X| 
>{\setlength{\hsize}{0.050000\hsize}}X|
>{\setlength{\hsize}{0.050000\hsize}}X|
>{\setlength{\hsize}{0.100000\hsize}}X|
>{\setlength{\hsize}{0.100000\hsize}}X|
>{\setlength{\hsize}{0.100000\hsize}}X|
>{\setlength{\hsize}{0.100000\hsize}}X|
>{\setlength{\hsize}{0.100000\hsize}}X|
>{\setlength{\hsize}{0.100000\hsize}}X|
>{\setlength{\hsize}{0.100000\hsize}}X|
>{\setlength{\hsize}{0.100000\hsize}}X|
}\hline
\multirow{2}{1\LL}{Data} & &
& \multicolumn{2}{c|}{ Elkan-Noto }
& \multicolumn{2}{c|}{ AlphaMax }
& \multicolumn{2}{c|}{ \AlgNameC }
& \multicolumn{2}{c|}{ \AlgNameGMM }
\\ 
 \cline{4-11}& $\alpha$ & $\beta$ & 100 & 1000 & 100 & 1000 & 100 & 1000 & 100 & 1000 \\ 
 \hline\hline
 \multirow{9}{1\LL}{$\mathcal{N}$\\ \mbox{$\Delta \mu =1$}}  & 0.05 &1.00& 0.473 & 0.460 & 0.079 & 0.102 & 0.064 & 0.085 & \textbf{0.034}* & \textbf{0.020}* 
 \\ 
 & 0.25 &1.00& 0.484 & 0.435 & 0.132 & 0.160 & 0.123 & 0.122 & \textbf{0.063}* & \textbf{0.044}* 
 \\ 
 & 0.50 &1.00& 0.395 & 0.347 & 0.155 & 0.125 & 0.173 & 0.096 & \textbf{0.073}* & \textbf{0.040}* 
 \\ 
\cline{2-11}
 &0.05 &0.95& 0.484 & 0.496 & 0.099 & 0.124 & 0.076 & 0.099 & \textbf{0.039}* & \textbf{0.022}* 
 \\ 
 & 0.25 &0.95& 0.510 & 0.469 & 0.127 & 0.167 & 0.115 & 0.111 & \textbf{0.074}* & \textbf{0.037}* 
 \\ 
 & 0.50 &0.95& 0.433 & 0.378 & 0.180 & 0.152 & 0.186 & 0.087 & \textbf{0.068}* & \textbf{0.048}* 
 \\ 
\cline{2-11}
 & 0.05 &0.75& 0.663 & 0.630 & 0.124 & 0.135 & 0.089 & 0.076 & \textbf{0.056}* & \textbf{0.024}* 
 \\ 
 & 0.25 &0.75& 0.641 & 0.608 & 0.152 & 0.209 & 0.141 & 0.120 & \textbf{0.141} & \textbf{0.060}* 
 \\ 
 & 0.50 &0.75& 0.548 & 0.485 & 0.219 & 0.218 & 0.244 & 0.137 & \textbf{0.040}* & \textbf{0.068}* 
 \\ 
\hline

\multirow{9}{1\LL}{$\mathcal{N}$\\ \mbox{$\Delta \mu = 2$}}   & 0.05 &1.00& 0.112 & 0.136 & 0.018 & 0.016 & 0.017 & 0.015 & \textbf{0.006}* & \textbf{0.004}* 
 \\ 
 & 0.25 &1.00& 0.177 & 0.168 & 0.050 & 0.049 & 0.049 & 0.042 & \textbf{0.018}* & \textbf{0.010}* 
 \\ 
 & 0.50 &1.00& 0.219 & 0.153 & 0.104 & 0.053 & 0.109 & 0.043 & \textbf{0.024}* & \textbf{0.015}* 
 \\  
\cline{2-11}
  & 0.05 &0.95& 0.125 & 0.162 & 0.015 & 0.019 & 0.015 & 0.015 & \textbf{0.009}* & \textbf{0.004}* 
 \\ 
 & 0.25 &0.95& 0.212 & 0.208 & 0.063 & 0.061 & 0.060 & 0.043 & \textbf{0.025}* & \textbf{0.011}* 
 \\ 
 & 0.50 &0.95& 0.271 & 0.211 & 0.099 & 0.077 & 0.106 & 0.046 & \textbf{0.023}* & \textbf{0.018}* 
 \\ 
\cline{2-11}

 & 0.05 &0.75& 0.215 & 0.285 & 0.030 & 0.036 & \textbf{0.022} & 0.012 & 0.034 & \textbf{0.005}* 
 \\ 
 & 0.25 &0.75& 0.405 & 0.403 & 0.100 & 0.139 & 0.084 & 0.036 & \textbf{0.025}* & \textbf{0.014}* 
 \\ 
 & 0.50 &0.75& 0.457 & 0.415 & 0.156 & 0.184 & 0.159 & 0.048 & \textbf{0.030}* & \textbf{0.020}* 
 \\ 
\hline 
\multirow{9}{1\LL}{$\mathcal{N}$\\ \mbox{$\Delta \mu =4$}}   & 0.05 &1.00& 0.010 & 0.013 & 0.022 & 0.019 & 0.022 & 0.019 & \textbf{0.001}* & \textbf{0.001}* 
 \\ 
 & 0.25 &1.00& 0.032 & 0.018 & 0.022 & 0.004 & 0.027 & 0.008 & \textbf{0.002}* & \textbf{0.002}* 
 \\ 
 & 0.50 &1.00& 0.070 & 0.020 & 0.032 & 0.005 & 0.039 & 0.012 & \textbf{0.002}* & \textbf{0.002}* 
 \\ 
\cline{2-11}
  & 0.05 &0.95& 0.018 & 0.029 & 0.018 & 0.014 & 0.019 & 0.015 & \textbf{0.001}* & \textbf{0.001}* 
 \\ 
 & 0.25 &0.95& 0.063 & 0.056 & 0.010 & 0.017 & 0.013 & 0.009 & \textbf{0.002}* & \textbf{0.001}* 
 \\ 
 & 0.50 &0.95& 0.115 & 0.083 & 0.040 & 0.032 & 0.045 & 0.010 & \textbf{0.002}* & \textbf{0.001}* 
 \\ 
\cline{2-11}

  & 0.05 &0.75& 0.062 & 0.114 & 0.011 & 0.006 & 0.018 & 0.015 & \textbf{0.001}* & \textbf{0.001}* 
 \\ 
 & 0.25 &0.75& 0.236 & 0.249 & 0.063 & 0.090 & 0.026 & 0.005 & \textbf{0.002}* & \textbf{0.002}* 
 \\ 
 & 0.50 &0.75& 0.380 & 0.353 & 0.130 & 0.168 & 0.106 & 0.021 & \textbf{0.002}* & \textbf{0.002}* 
 \\ 
\hline 
 \hline
\multirow{9}{1\LL}{$\mathcal{L}$\\ \mbox{$\Delta \mu =1$}} & 0.05 &1.00& 0.410 & 0.389 & 0.195 & 0.256 & \textbf{0.147}* & \textbf{0.190}* & 0.418 & 0.390 
 \\ 
 & 0.25 &1.00& 0.410 & 0.356 & 0.151 & 0.209 & \textbf{0.103}* & \textbf{0.117}* & 0.148 & 0.183 
 \\ 
 & 0.50 &1.00& 0.367 & 0.299 & 0.195 & 0.154 & \textbf{0.190} & \textbf{0.038}* & 0.243 & 0.236 
 \\ 
\cline{2-11}
 & 0.05 &0.95& 0.455 & 0.430 & 0.204 & 0.271 & \textbf{0.140}* & \textbf{0.192}* & 0.424 & 0.406 
 \\ 
 & 0.25 &0.95& 0.455 & 0.401 & 0.180 & 0.230 & \textbf{0.115}* & \textbf{0.112}* & 0.157 & 0.187 
 \\ 
 & 0.50 &0.95& 0.412 & 0.331 & 0.222 & 0.179 & \textbf{0.214} & \textbf{0.047}* & 0.249 & 0.241 
 \\ 
\cline{2-11}
  & 0.05 &0.75& 0.593 & 0.578 & 0.225 & 0.325 & \textbf{0.133}* & \textbf{0.181}* & 0.415 & 0.440 
 \\ 
 & 0.25 &0.75& 0.602 & 0.562 & 0.191 & 0.327 & \textbf{0.119}* & \textbf{0.123}* & 0.210 & 0.230 
 \\ 
 & 0.50 &0.75& 0.520 & 0.470 & 0.264 & 0.278 & 0.272 & \textbf{0.053}* & \textbf{0.254} & 0.247 
 \\ 
\hline 

\multirow{9}{1\LL}{$\mathcal{L}$\\ \mbox{$\Delta \mu =2$}}  & 0.05 &1.00& 0.116 & 0.123 & 0.052 & 0.061 & \textbf{0.045} & 0.056 & 0.448 & \textbf{0.014}* 
 \\ 
 & 0.25 &1.00& 0.158 & 0.132 & 0.054 & 0.067 & 0.041 & 0.051 & \textbf{0.037} & \textbf{0.027}* 
 \\ 
 & 0.50 &1.00& 0.186 & 0.125 & 0.077 & 0.049 & \textbf{0.074} & \textbf{0.021}* & 0.130 & 0.068 
 \\ 
\cline{2-11}
 & 0.05 &0.95& 0.131 & 0.146 & 0.053 & 0.066 & \textbf{0.042}* & 0.053 & 0.455 & \textbf{0.016}* 
 \\ 
 & 0.25 &0.95& 0.186 & 0.175 & 0.071 & 0.082 & \textbf{0.049} & 0.049 & 0.055 & \textbf{0.021}* 
 \\ 
 & 0.50 &0.95& 0.239 & 0.183 & 0.082 & 0.082 & \textbf{0.073} & \textbf{0.024}* & 0.134 & 0.075 
 \\ 
\cline{2-11}

 & 0.05 &0.75& 0.230 & 0.268 & 0.079 & 0.097 & \textbf{0.046}* & \textbf{0.052}* & 0.461 & 0.261 
 \\ 
 & 0.25 &0.75& 0.375 & 0.377 & 0.123 & 0.159 & \textbf{0.090}* & \textbf{0.039} & 0.191 & 0.050 
 \\ 
 & 0.50 &0.75& 0.450 & 0.406 & 0.201 & 0.212 & 0.216 & \textbf{0.045} & \textbf{0.101}* & 0.067 
 \\ 
\hline 
\multirow{9}{1\LL}{$\mathcal{L}$\\ \mbox{$\Delta \mu =4$}} & 0.05 &1.00& 0.015 & 0.020 & 0.024 & 0.020 & 0.025 & 0.021 & \textbf{0.015} & \textbf{0.011}* 
 \\ 
 & 0.25 &1.00& 0.040 & 0.025 & 0.018 & \textbf{0.005} & 0.022 & 0.005 & \textbf{0.009}* & 0.009 
 \\ 
 & 0.50 &1.00& 0.094 & 0.027 & 0.022 & 0.005 & 0.028 & 0.006 & \textbf{0.002}* & \textbf{0.002}* 
 \\ 
\cline{2-11}
 & 0.05 &0.95& 0.024 & 0.037 & \textbf{0.019} & 0.017 & 0.020 & 0.018 & 0.022 & \textbf{0.012}* 
 \\ 
 & 0.25 &0.95& 0.074 & 0.063 & 0.013 & 0.021 & 0.011 & 0.011 & \textbf{0.009} & \textbf{0.009}* 
 \\ 
 & 0.50 &0.95& 0.141 & 0.089 & 0.028 & 0.034 & 0.026 & 0.008 & \textbf{0.002}* & \textbf{0.002}* 
 \\ 
\cline{2-11} 

  & 0.05 &0.75& 0.072 & 0.124 & \textbf{0.008}* & \textbf{0.005}* & 0.015 & 0.010 & 0.034 & 0.012 
 \\ 
 & 0.25 &0.75& 0.244 & 0.258 & 0.076 & 0.095 & 0.033 & 0.009 & \textbf{0.009}* & \textbf{0.008} 
 \\ 
 & 0.50 &0.75& 0.389 & 0.355 & 0.128 & 0.173 & 0.106 & 0.014 & \textbf{0.002}* & \textbf{0.002}* 
 \\ 
\hline 
\end{tabularx}
\normalsize
\end{table*}

\paragraph{Results for multivariate \AlgNameC \ and \AlgNameGMM.}
To demonstrate the efficacy of the class prior preserving transform, we implemented the multivariate versions of \AlgNameC\ and \AlgNameGMM\  and evaluated them on the twelve real data sets without applying the transform. There were significant stability and computational issues related to the high-dimensional nature of the data sets. \AlgNameGMM\ was numerically unstable because of singular/nearly-singular covariance matrix, whereas \AlgNameC \ became computationally demanding because the number of bins (for histogram based density estimation) grow exponentially with the dimension, resulting in a large parameter vector $\rewtvec$ and, consequently, a large optimization problem, even after removing the zero-count bins. This is expected, as density estimation for multivariate data is known to be problematic, which is one of the main reasons for introducing our transform. To make estimation feasible under these stability and computational issues, we used dimensionality reduction. Though not all data sets posed the same level of difficulty, to have a standard approach and permit effective density estimation, we used the top three principal components, obtained via principal component analysis on the z-score normalized data (mixture and component samples combined), as input to the two algorithms. We also attempted using top $k$ principal components that preserve 75 percent of the total variance, however, for some of the data sets, the dimension was still too high.\par 
In the same manner as in the univariate case, we used histograms in the multivariate implementation of \AlgNameC.\ The bin-width for a dimension was selected to minimize the asymptotic mean integrated squared error (AMISE) with a normal reference rule, using the component sample, $C$. The formula for the bin-width of dimension $k$ is given by:
$$b_k= 3.5 \sigma_k |C|^{\nicefrac{-1}{(2+d)}},$$
where $d$ is the total number of dimensions, $\sigma_k$ is the standard deviation of the $k^{th}$ dimension and $|C|$ is the size of the component sample. Bins were added to cover the range of the entire data, mixture and component combined, and empty bins were removed to reduce the size of the optimization problem. \par
\autoref{tab:multiD} contains the results of \AlgNameC\  and \AlgNameGMM \ on the real-life data sets, using the top three principal components under column headings \AlgNameCPca\ (M for multi-dimensional) and \AlgNameGMMPca, respectively. The results of \AlgNameCT\  and \AlgNameGMMT \ with the class-prior preserving transform are also provided for comparison. Notice that, though \AlgNameCPca\ (without transform) performs well, \AlgNameCT\ (with transform) is significantly better in terms of estimation error, despite having a lower computational cost. Also notice the deterioration in the performance of \AlgNameGMMPca\ (without transform) compared to \AlgNameGMMT\ (with transform). 

\begin{table*}[!htbp]
\centering
\caption{\small Mean absolute difference between estimated and true mixing proportion over twelve data sets from the UCI Machine Learning Repository. Statistical significance was evaluated by comparing \AlgNameCPca,\ \AlgNameGMMPca\ (both using top three principal components of the data as input), \AlgNameCT,\ \AlgNameGMMT\ (both using class-prior preserving transform). The bold font type indicates the winner and the asterisk indicates statistical significance. For each data set, shown are the true mixing proportion ($\alpha$), true proportion of the positives in the labeled sample ($\beta$) and the percent of the total variance explained by the top three principal components rounded to the nearest integer (\%variance). \vspace{0.1cm}}
\footnotesize
\tracingtabularx
\begin{tabularx}{1.01\linewidth} 
{|>{\setlength{\hsize}{0.160000\hsize}}X| 
>{\setlength{\hsize}{0.100000\hsize}}X|
>{\setlength{\hsize}{0.100000\hsize}}X|
>{\setlength{\hsize}{0.140000\hsize}}X|
>{\setlength{\hsize}{0.167500\hsize}}X|
>{\setlength{\hsize}{0.167500\hsize}}X|
>{\setlength{\hsize}{0.167500\hsize}}X|
>{\setlength{\hsize}{0.167500\hsize}}X|
}\hline
\multirow{1}{1\LL}{Data} & $\mpu$ & $\mpl$ & \%variance
& \multicolumn{1}{c|}{ \AlgNameCT}
& \multicolumn{1}{c|}{\AlgNameCPca }
& \multicolumn{1}{c|}{ \AlgNameGMMT }
& \multicolumn{1}{c|}{ \AlgNameGMMPca }
\\ 
\hline
 \hline
 \multirow{3}{1\LL}{Bank}  & 0.095 & 1.00 & & 0.037 & \textbf{0.028}* & 0.163 & 0.528 
 \\ 
 & 0.096 & 0.95 &40 & 0.036 & \textbf{0.032} & 0.155 & 0.574 
 \\ 
 & 0.101 & 0.75 & & \textbf{0.040} & 0.047 & 0.127 & 0.580 
 \\ 
\hline 
\multirow{3}{1\LL}{Concrete}  & 0.419 & 1.00 & & 0.181 & 0.276 & 0.077 & \textbf{0.020}* 
 \\ 
 & 0.425 & 0.95 & 60& 0.231 & 0.269 & 0.095 & \textbf{0.028}* 
 \\ 
 & 0.446 & 0.75 & & 0.272 & 0.320 & 0.233 & \textbf{0.063}* 
 \\ 
\hline 
\multirow{3}{1\LL}{Gas}  & 0.342 & 1.00 &  & 0.017 & 0.030 & \textbf{0.008}* & 0.585 
 \\ 
 & 0.353 & 0.95 & 81 & \textbf{0.006} & 0.021 & 0.006 & 0.575 
 \\ 
 & 0.397 & 0.75 &  & 0.009 & 0.064 & \textbf{0.006}* & 0.533 
 \\ 
\hline 
\multirow{3}{1\LL}{Housing}  & 0.268 & 1.00 & & \textbf{0.094}* & 0.132 & 0.209 & 0.316 
 \\ 
 & 0.281 & 0.95 & 69 & 0.110 & \textbf{0.110} & 0.204 & 0.308 
 \\ 
 & 0.330 & 0.75 & & \textbf{0.134} & 0.205 & 0.172 & 0.283 
 \\ 
\hline 
\multirow{3}{1\LL}{Landsat}  & 0.093 & 1.00 & & \textbf{0.007} & 0.008 & 0.157 & 0.443 
 \\ 
 & 0.103 & 0.95 & 89 &\textbf{0.008}* & 0.028 & 0.152 & 0.298 
 \\ 
 & 0.139 & 0.75 & & \textbf{0.012}* & 0.053 & 0.143 & 0.270 
 \\ 
\hline 
\multirow{3}{1\LL}{Mushroom}  & 0.409 & 1.00 & & \textbf{0.022}* & 0.075 & 0.037 & 0.432 
 \\ 
 & 0.416 & 0.95 & 24 & \textbf{0.008}* & 0.021 & 0.037 & 0.398 
 \\ 
 & 0.444 & 0.75 & & \textbf{0.020} & 0.050 & 0.024 & 0.375 
 \\ 
\hline 
\multirow{3}{1\LL}{Pageblock}  & 0.086 & 1.00 & & \textbf{0.044} & 0.046 & 0.129 & 0.178 
 \\ 
 & 0.087 & 0.95 & 72 & 0.052 & \textbf{0.040}* & 0.125 & 0.178 
 \\ 
 & 0.090 & 0.75 & & 0.064 & \textbf{0.031}* & 0.111 & 0.188 
 \\ 
\hline 
\multirow{3}{1\LL}{Pendigit}  & 0.243 & 1.00 & & \textbf{0.009}* & 0.071 & 0.081 & 0.289 
 \\ 
 & 0.248 & 0.95 & 65 &\textbf{0.005}* & 0.070 & 0.074 & 0.286 
 \\ 
 & 0.268 & 0.75 & &\textbf{0.007}* & 0.092 & 0.062 & 0.260 
 \\ 
\hline 
\multirow{3}{1\LL}{Pima}  & 0.251 & 1.00 & & \textbf{0.111} & 0.123 & 0.171 & 0.299 
 \\ 
 & 0.259 & 0.95 & 60 &\textbf{0.110}* & 0.156 & 0.168 & 0.292 
 \\ 
 & 0.289 & 0.75 & & \textbf{0.156} & 0.178 & 0.175 & 0.286 
 \\ 
\hline 
\multirow{3}{1\LL}{Shuttle}  & 0.139 & 1.00 & & \textbf{0.029}* & 0.064 & 0.157 & 0.232 
 \\ 
 & 0.140 & 0.95 & 67 & \textbf{0.007}* & 0.055 & 0.157 & 0.227 
 \\ 
 & 0.143 & 0.75 & & \textbf{0.004}* & 0.015 & 0.148 & 0.356 
 \\ 
\hline 
\multirow{3}{1\LL}{Spambase}  & 0.226 & 1.00 & & 0.041 & \textbf{0.034} & 0.059 & 0.487 
 \\ 
 & 0.240 & 0.95 & 22 & 0.042 & \textbf{0.041} & 0.063 & 0.485 
 \\ 
 & 0.295 & 0.75 & & \textbf{0.044}* & 0.072 & 0.059 & 0.434 
 \\ 
\hline 
\multirow{3}{1\LL}{Wine}  & 0.566 & 1.00 & & \textbf{0.060} & 0.258 & 0.070 & 0.134 
 \\ 
 & 0.575 & 0.95 & 65 &\textbf{0.063} & 0.256 & 0.076 & 0.126 
 \\ 
 & 0.612 & 0.75 & & 0.353 & 0.302 & 0.293 & \textbf{0.096}* 
 \\ 
\hline 
\end{tabularx}
\label{tab:multiD}
\normalsize
\end{table*}

\end{document}